%% file: main.tex
\newif\ifRAL
\RALfalse 	

\newif\ifTRO
\TROfalse 	

\ifRAL
    \documentclass[letterpaper, 10 pt, journal, twoside]{IEEEtran}
\else
    \ifTRO
        \documentclass[lettersize,journal]{IEEEtran}
    \else
        \documentclass[letterpaper, 10 pt, conference]{IEEEconf}
    \fi
\fi

\usepackage{balance}
\usepackage{algorithm}
\usepackage{enumerate}
\usepackage{amsmath} 
\usepackage{amssymb}  
\usepackage{graphicx} 
\usepackage{amsfonts}
\usepackage{xcolor,microtype}
\usepackage{cite}
\usepackage[font=small,skip=1pt]{subcaption} 
\usepackage[font=small,skip=1pt]{caption}
\usepackage{url}
\usepackage{booktabs}
\usepackage{bm}

\usepackage{soul}  
\usepackage{comment} 
\usepackage{mathtools} 

\usepackage{todonotes}

\usepackage[
    colorlinks=false,
    linkbordercolor=red
]{hyperref}

\usepackage{amsthm} 
\usepackage[utf8]{inputenc}
\usepackage[english]{babel}

\usepackage{siunitx}    

\newtheorem{proposition}{Proposition}
\newtheorem{definition}{Definition}[section]

\newtheorem{assumption}{Assumption}

\input{definitions.tex}

\title{\LARGE \bf 
Vision-based Underwater Formation Control \\ With Input Saturations
via Barrier Lyapunov Functions 
}

\author{
Nicola De Carli, João Zenário, Victor Nan Fernandez-Ayala, Dimos V. Dimarogonas
}

\begin{document}

\maketitle
\thispagestyle{empty}
\pagestyle{empty}

\begin{abstract}
\input{Sections/Abstract}
\end{abstract}

\ifRAL
   \begin{IEEEkeywords}
    Multi-agent systems, cooperative localization, observer design
    \end{IEEEkeywords}
\fi

\section{Introduction}
\input{Sections/introduction}

\label{sec:introduction}

\section{Preliminaries and Problem Formulation}
\input{Sections/problem_formulation}

\label{sec:preliminaries}

\section{Control Design}
\input{Sections/control}

\label{sec:control}

\section{Simulation 
Results}
\input{Sections/results}

\label{sec:results}


\section{Conclusions}
\input{Sections/conclusions}
\label{sec:conclusions}



\bibliographystyle{IEEEtran}
\bibliography{references}

\end{document}

%% file: definitions.tex
\DeclareMathOperator*{\diag}{diag}

\DeclareMathOperator*{\argmin}{\arg\!\min}

\DeclareMathOperator*{\col}{col}

\newcommand{\nR}[1]{\mathbb{R}^{#1}}		
\newcommand{\nS}[1]{\mathbb{S}^{#1}}		

\newcommand{\vect}[1]{\ensuremath{\bm{#1}}}		
\newcommand{\matr}[1]{\ensuremath{\bm{#1}}}		

\newcommand{\define}{\coloneqq}			
\newcommand{\norm}[1]{\left\lVert#1\right\rVert}

\newcommand{\setparenthesis}[1]{\left\{#1\right\}}

\newcommand{\skewM}[1]{[{#1}]_{\times}}

\newcommand{\eye}[1]{\matr{I}_{#1}}
\newcommand{\zeros}[1]{\matr{0}_{#1}}

\renewcommand{\frame}{\mathcal{F}}		

\newcommand{\pos}{\vect{p}}				
\newcommand{\vel}{\vect{v}}				
\newcommand{\quat}{\boldsymbol{\mathfrak{q}}}				
\newcommand{\angvel}{\vect{\omega}}				
\newcommand{\rotMat}{\matr{R}}				

\newcommand{\pose}{\vect{\eta}}
\newcommand{\dpose}{\dot{\vect{\eta}}}
\newcommand{\veltwist}{\vect{\nu}}
\newcommand{\dveltwist}{\dot{\vect{\nu}}}

\newcommand{\inertiaM}{\vect{M}}
\newcommand{\coriolis}{\vect{C}}
\newcommand{\damping}{\vect{D}}

\newcommand{\wrench}{\mathbf{w}}
\newcommand{\gravitywrench}{\vect{g}}

\newcommand{\wrenchSet}{\mathcal{W}}

\newcommand{\Nrob}{\ensuremath{N}}
\newcommand{\graph}{\ensuremath{\mathcal{G}}}
\newcommand{\edges}{\ensuremath{\mathcal{E}}}
\newcommand{\vertices}{\ensuremath{\mathcal{V}}}

\newcommand{\neigh}{\ensuremath{\mathcal{N}}}

\newcommand{\followerset}{\ensuremath{\mathcal{F}}}

\newcommand{\recenteredBarrier}{V}

\newcommand{\todoing}[1]{\todo[inline,color=blue!20, linecolor=orange!250]{\small#1}}

\newcommand{\nicksay}[1]{\todoing{\textbf{Nick:} #1}}

%% file: Sections/abstract.tex
In this work, we propose a communication-free framework for vision-based formation control of fully actuated underwater robots subject to sensing constraints, collision-avoidance requirements, and input saturations. Recentered barrier Lyapunov functions encode sensing and collision-avoidance constraints, while command-filtered backstepping extends the design to the second-order vehicle dynamics. The resulting control objective is enforced through a quadratic program that explicitly accounts for actuator limits. Conservative sensing domains provide margins from the physical limits and are adaptively relaxed when necessary, allowing temporary violation of the conservative bounds. The proposed approach is validated through realistic Software-in-the-Loop (SITL) simulations in Gazebo. \href{https://github.com/KTH-DHSG/brov2_constrained_formation_control}{[Code]} \href{https://www.youtube.com/watch?v=n3sPo5NFHVQ}{[Video]} 


%% file: Sections/introduction.tex
Underwater robotics remains particularly challenging due to uncertain vehicle dynamics and severe limitations in perception and communication \cite{torroba2026marinarium, zhou2021survey, basso2025terrain}. In particular, underwater vehicles are subject to complex and often poorly predictable hydrodynamic forces \cite{fossen2011handbook}, while GPS is unavailable below the surface and acoustic communication is typically limited in range, bandwidth, and reliability. These limitations become even more critical in multi-robot systems, where coordination requires either explicit communication or implicit information exchange through onboard sensing. As a result, the deployment of cooperative underwater robotic teams remains comparatively limited \cite{zhou2021survey, yan2023formation, matouvs2022formation, hoff2024communication}.

{
A fundamental capability for multi-robot coordination is formation control, namely the ability of a team to maintain a prescribed geometric configuration. In sufficiently clear water, onboard cameras can provide relative information without requiring explicit communication. However, limited field of view and sensing range, the latter being particularly restrictive underwater, must be explicitly accounted for to preserve visibility of neighboring robots. Reactive approaches based on control barrier functions (CBFs) \cite{de2024distributed, mestres2024distributed} and barrier Lyapunov functions (BLFs) \cite{panagou2015distributed, restrepo2022robust, csekerciouglu2024robust} have been widely used for this purpose. In this work, we focus on BLFs, whose divergence near the constraint boundary provides a strong repulsive action, but also makes them undefined once the constraint is violated. This becomes problematic when conflicting objectives or actuator limitations make temporary violation of conservative bounds unavoidable.

Related issues arise in prescribed performance control \cite{mehdifar2022funnel, mehdifar2025robust, trakas2023robust}, where auxiliary dynamics have been introduced to temporarily relax selected constraints and recover their nominal bounds once feasibility is restored \cite{mehdifar2022funnel, mehdifar2025robust, trakas2023robust, restrepo2024tracking}. Existing approaches either consider box constraints \cite{mehdifar2022funnel, trakas2023robust} or aggregate multiple hard and soft constraints through smooth minimum operators and a common relaxation variable \cite{mehdifar2025robust, restrepo2024tracking}.
}

In this work, we propose a formation control framework for fully actuated underwater vehicles described by six-degree-of-freedom Euler--Lagrange-type dynamics and interacting over a directed sensing graph. Each follower relies only on relative position measurements from its parent, so no explicit inter-robot communication is required. Formation objectives, collision avoidance, sensing range, and field-of-view constraints are encoded at the pose level through recentered barrier Lyapunov functions. The resulting design is extended to the vehicle dynamics through command-filtered backstepping, yielding a control Lyapunov function (CLF) whose derivative is affine in the thruster forces. A quadratic program (QP) then enforces the desired dissipation associated with the generalized-velocity tracking error directly in thruster coordinates, while explicitly accounting for actuator saturations. Conservative barrier domains are combined with an adaptive relaxation mechanism that enlarges an individual domain only when its margin becomes small and the zero-relaxation CLF condition is incompatible with the actuator limits, and subsequently recovers it toward its nominal value. 
Unlike existing relaxation schemes, the proposed approach is not restricted to box constraints and adapts each constraint independently according to its current margin. The framework is validated through Software-in-the-Loop simulations in Gazebo with PX4 and BlueROV2 Heavy underwater vehicles. 

The remainder of the paper is organized as follows. Section II introduces the system model and problem formulation, Section III presents the control design, Section IV reports simulation results, and Section V concludes the paper.

%% file: Sections/problem_formulation.tex
\subsection*{Notation}

The sets
\(
\nR{}
\) and
\(
\nR{}_{\geq 0},
\)
denote the real numbers and the nonnegative real numbers. 
Bold lowercase symbols denote column vectors, bold uppercase symbols denote matrices, and calligraphic uppercase symbols denote sets. For a set \(\mathcal{S}\), its boundary is denoted by
\(
\partial\mathcal{S}.
\) The \(n\times n\) identity matrix and the \(m\times n\) zero matrix are denoted by
\(
\eye{n}
\)
and
\(
\mathbf{0}_{m\times n},
\)
respectively.
For a square matrix
\(
\matr{A}\in\nR{n\times n},
\)
its trace is denoted by
\(
\operatorname{tr}(\matr{A})
\).
For vectors
\(
\vect{x}_1,\ldots,\vect{x}_m,
\)
their vertical concatenation is denoted by
\(\col(\vect{x}_1,\ldots,\vect{x}_m) \).
The Euclidean norm is denoted by
\(
\norm{\vect{x}},
\)
the distance of a point \(\vect{x}\) from a set \(\mathcal{S}\) by
\(
\lvert\vect{x}\rvert_{\mathcal{S}}
\define
\inf_{\vect{y}\in\mathcal{S}}
\norm{\vect{x}-\vect{y}},
\)
while, for a positive-definite matrix
\(
\matr{P},
\)
the corresponding weighted norm is
\(
\norm{\vect{x}}_{\matr{P}}
\define
\sqrt{
\vect{x}^{\top}
\matr{P}
\vect{x}
}.
\)
The unit three-sphere and the special orthogonal group are defined as
\(\nS{3}
    \define
    \left\{
        \vect{q}\in\nR{4}
        \,\middle|\,
        \norm{\vect{q}}=1
    \right\}
\) and
\(
    \mathrm{SO}(3)
    \define
    \left\{
        \matr{R}\in\nR{3\times3}
        \,\middle|\,
        \matr{R}^{\top}\matr{R}=\eye{3},
        \ \det(\matr{R})=1
    \right\}
\).
For
\(
\matr{A}\in\nR{3\times3},
\)
its skew-symmetric part is denoted by
\(
    \operatorname{skew}(\matr{A})
    \define
    \frac{1}{2}
    \left(
        \matr{A}-\matr{A}^{\top}
    \right).
\)
For \(\vect x\in\mathbb R^3\), let \([\vect x]_\times\vect y
=\vect x\times\vect y\), with \(([\vect x]_\times)^\vee=\vect x\).
%
For \(x\in\mathbb{R}\), let
\(
[x]_+\define\max\{x,0\}.
\)


\subsection{System Model}
\label{subsec:model}

Consider a team of \(\Nrob\) fully-actuated underwater vehicles. For each robot \(i\in \vertices
\define\setparenthesis{1,\ldots,\Nrob}\), let \(\frame_I\), \(\frame_{B_i}\), and \(\frame_{C_i}\) denote, respectively, the inertial frame, the body-fixed frame, and the camera frame. The pose of robot \(i\) is described by
\(
\pose_i
\define
(
\pos_i,
\quat_i
)
\in
\nR{3}\times\nS{3},
\)
where \(\pos_i\in\nR{3}\) is the position of the body-frame origin expressed in \(\frame_I\), and \(\quat_i\in\nS{3}\) is the unit quaternion representing the orientation of \(\frame_{B_i}\) with respect to \(\frame_I\). The corresponding body-fixed generalized velocity is
\(
\veltwist_i
\define
\col(
\vel_i,
\angvel_i
)
\in\nR{6},
\)
where \(\vel_i\in\nR{3}\) and \(\angvel_i\in\nR{3}\) are the linear and angular velocities expressed in \(\frame_{B_i}\), respectively.

Following the standard six-degree-of-freedom marine-craft model in~\cite{fossen2011handbook}, the dynamics of each robot \(i\in \vertices\) are written as
\begin{subequations}
\label{eq:underwater_vehicle_model}
\begin{align}
    \dpose_i
    &=
    \boldsymbol{J}(\pose_i)\veltwist_i,
    \label{eq:underwater_vehicle_model_kinematics}
    \\
    \dveltwist_i
    &=
    \inertiaM_i^{-1}
    \left(
        \wrench_i
        -
        \coriolis_i(\veltwist_i)\veltwist_i
        -
        \damping_i(\veltwist_i)\veltwist_i
        -
        \gravitywrench_i(\pose_i)
    \right).
    \label{eq:underwater_vehicle_model_dynamics}
\end{align}
\end{subequations}
The kinematic transformation
\(
\boldsymbol{J}(\pose_i)\in\nR{7\times6}
\)
maps the body-fixed generalized velocity into the time derivative of the
pose coordinates. Let
\(
\rotMat_i\define\rotMat(\quat_i)\in \mathrm{SO}(3)
\)
denote the rotation matrix from \(\frame_{B_i}\) to \(\frame_I\), and,
under the scalar-first convention, write
\(
\quat_i=\col(q_{0,i},\vect{q}_{v,i})\in\nS{3}
\).
Then,
\begin{equation*}
    \boldsymbol{J}(\pose_i)
    \define
    \begin{bmatrix}
        \rotMat_i & \boldsymbol{0}_{3\times3}
        \\
        \boldsymbol{0}_{4\times3} &
        \frac{1}{2}\boldsymbol{Q}(\quat_i)
    \end{bmatrix},
    \quad
    \boldsymbol{Q}(\quat_i)
    \define
    \begin{bmatrix}
        -\vect{q}_{v,i}^{\top}
        \\
        q_{0,i}\eye{3}+\skewM{\vect{q}_{v,i}}
    \end{bmatrix}.
    \label{eq:kinematic_transformation}
\end{equation*}
The constant matrix  \( \inertiaM_i\in\nR{6\times 6} \) is symmetric and positive definite and represents the generalized inertia matrix, including the rigid-body and added-mass contributions,  \( \coriolis_i(\veltwist_i)\in\nR{6\times 6} \) collects the corresponding Coriolis and centripetal terms,  \( \damping_i(\veltwist_i)\in\nR{6\times 6} \) models the hydrodynamic damping, and \( \gravitywrench_i(\pose_i)\in\nR{6} \)  contains the gravitational and buoyancy restoring forces and moments. Furthermore,  
\( \wrench_i\in\nR{6} \) denotes the control wrench expressed in \(\frame_{B_i}\). 
The generalized wrench is generated by $n_{f,i}>6$ thrusters. Let
\(
    \vect{f}_i
    \define
    \col(
        f_{i,k}
    )_{k=1}^{n_{f,i}}
    \in\nR{n_{f,i}}
\)
collect the individual thruster forces. The control-allocation model is
\(
    \wrench_i
    =
    \boldsymbol{B}_i\vect{f}_i,
\)
where
\(
\boldsymbol{B}_i\in\nR{6\times n_{f,i}}
\)
is the control-allocation matrix determined by the positions and thrust directions of the actuators. We assume that
\(
\operatorname{rank}\left(\boldsymbol{B}_i\right)=6,
\)
so that \(\boldsymbol{B}_i\) has full row rank and the vehicle is fully actuated. 
The actuator limits are represented directly by the box constraint
\begin{equation}\label{eq:input_lims}
    \underline{\vect{f}}_i
        \leq
        \vect{f}_i
        \leq
        \overline{\vect{f}}_i
\end{equation}
where
\(
\underline{\vect{f}}_i,\overline{\vect{f}}_i\in\mathbb{R}^{n_{f,i}}
\)
are the componentwise lower and upper thruster-force bounds.

\subsubsection*{\textbf{Camera Model}}
Each robot is equipped with a rigidly attached forward-looking depth camera. The positive \(x\)-axis of the camera frame \(\frame_{C_i}\) is assumed to coincide with the optical axis, while its \(y\)- and \(z\)-axes define the horizontal and vertical image directions, respectively. Let
\(
\rotMat_{CBi}\in \mathrm{SO}(3)
\)
denote the rotation matrix from \(\frame_{B_i}\) to \(\frame_{C_i}\), and let
\(
\pos_{CBi}\in\nR{3}
\)
denote the position of the camera-frame origin relative to the body-frame origin, expressed in \(\frame_{B_i}\). These extrinsic parameters are assumed constant and known from calibration.

The relative position of robot \(j\) with respect to robot \(i\), expressed in the inertial frame, is
\(
    \pos_{ij}
    \define
    \pos_j-\pos_i
\), and the relative distance is denoted by \(d_{ij} \define \norm{\pos_{ij}}\).
The corresponding coordinates in the body and camera frames of robot \(i\) are, respectively, 
\(
    \pos_{ij}^{B}
    =
    \rotMat_i^\top\pos_{ij},
\) and
\begin{equation}\label{eq:relative_target_camera}
    \pos_{ij}^{C}
    =
    \rotMat_{CBi}
    \left(
        \pos_{ij}^{B}-\pos_{CBi}
    \right).
\end{equation}
Writing
\(
    \pos_{ij}^{C}
    =
    \col(
        p_{x,ij}^{C},
        p_{y,ij}^{C},
        p_{z,ij}^{C}
    ),
\)
the camera is characterized by horizontal and vertical half-aperture angles
\(
\theta_h^{\max},\theta_v^{\max}\in(0,\pi/2)
\)
and by a maximum sensing range
\(
d_{\max}>0.
\)
Accordingly, a target can be detected only if it lies in front of the camera, within its angular field of view, and within the admissible sensing range, namely,
\begin{equation}
\begin{alignedat}{2}
    d_{ij}
    &< d_{\max},
    \qquad&
    p_{x,ij}^{C}
    &> 0,
    \\
    \left|p_{y,ij}^{C}\right|
    &<
    p_{x,ij}^{C}\tan\left(\theta_h^{\max}\right),
    \qquad&
    \left|p_{z,ij}^{C}\right|
    &<
    p_{x,ij}^{C}\tan\left(\theta_v^{\max}\right).
\end{alignedat}
\label{eq:camera_sensing_limits}
\end{equation}
The maximum range \(d_{\max}\) depends on the sensing hardware and on the operating conditions and is particularly relevant underwater, where light attenuation and scattering can significantly reduce visibility as the water turbidity increases. For simplicity, \(d_{\max}\) is defined here in terms of the inter-robot distance \(\left\|\pos_{ij}\right\|\), rather than the distance between the camera center of robot \(i\) and the observed point on robot \(j\).

To express the field of view constraints in normalized form, define the horizontal and vertical image coordinates
\begin{equation}
    \alpha_{h,ij}
    \define
    \frac{p_{y,ij}^{C}}
    {p_{x,ij}^{C}\tan\left(\theta_h^{\max}\right)},
    \qquad
    \alpha_{v,ij}
    \define
    \frac{p_{z,ij}^{C}}
    {p_{x,ij}^{C}\tan\left(\theta_v^{\max}\right)}.
    \label{eq:normalized_image_coordinates}
\end{equation}
The conditions in \eqref{eq:camera_sensing_limits} are therefore equivalently written as (see also Fig.~\ref{fig:sensing_geometry})
\begin{equation}
\begin{alignedat}{2}
    &
    d_{ij}
    < d_{\max},
    \quad
    &p_{x,ij}^{C}
    > 0,
    \quad
    &\left|\alpha_{h,ij}\right| < 1,
    \quad
    &\left|\alpha_{v,ij}\right| < 1.
\end{alignedat}
\label{eq:camera_sensing_limits_alpha}
\end{equation}
Because \(\pos_{ij}^{C}\) depends on the relative position \( \pos_{ij} \) and on the orientation of the observing robot, visibility preservation is intrinsically coupled to its translational and rotational motion.

\begin{figure}[t]
    \centering
    \includegraphics[width=0.40\columnwidth]
    {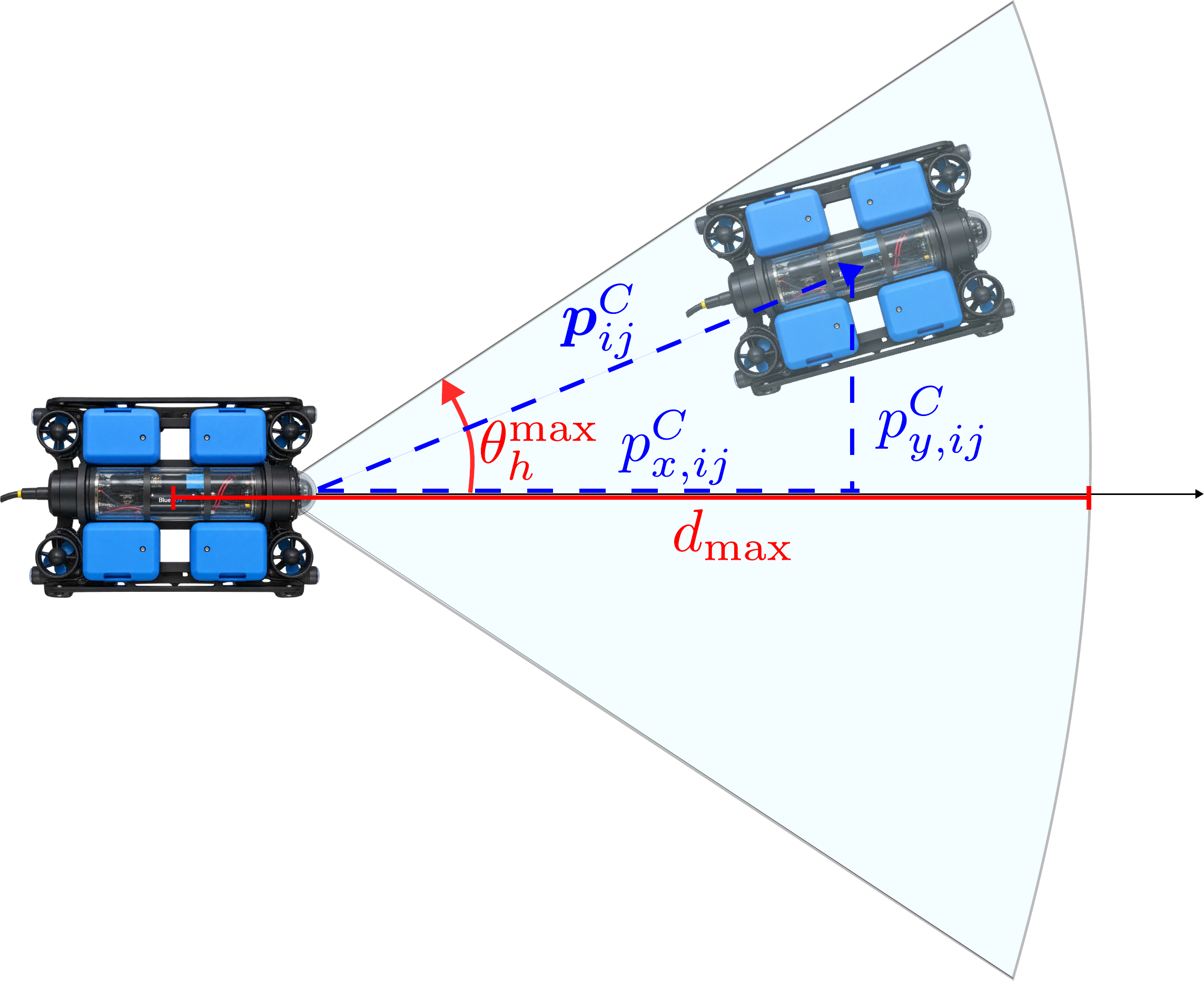}
    \caption{
       Geometric interpretation of the sensing constraints.
        Top view illustration of the maximum sensing range
        \(d_{\max}\) and camera field of view. 
    }
    \label{fig:sensing_geometry}
\end{figure}


\subsection{{Interaction Graph}}
\label{subsec:graph}

Underwater communication is typically characterized by limited bandwidth, significant latency, and reduced reliability. Accordingly, the proposed coordination architecture does not rely on the exchange of information among the robots. Instead, each vehicle uses only its own state and relative measurements of neighboring vehicles obtained through its onboard camera.

The sensing interactions among the robots are described by a directed graph
\(
\graph
\define
\left(
    \vertices,
    \edges
\right),
\)
where
\(
\edges\subseteq\vertices\times\vertices
\)
is the set of directed sensing edges. A directed edge
\(
e_{ij}=(i,j)\in\edges
\)
indicates that robot \(i\) observes robot \(j\) through its onboard camera. The out-neighbor set of robot \(i\) is therefore defined as
\(
    \neigh_i
    \define
    \left\{
        j\in\vertices
        \,\middle|\,
        (i,j)\in\edges
    \right\}.
\)
The sensing graph being directed is a natural consequence of the field-of-view limitations: the fact that robot \(i\) can sense robot \(j\) does not necessarily imply that robot \(j\) can sense robot \(i\).

We consider a leader--follower sensing topology in which each follower
observes a unique parent.

\begin{assumption}
\label{ass:sensing_graph}
The sensing graph \(\graph\) is a directed spanning tree rooted at the
leader~\cite[Section~3.3]{bullo2022network}.
\end{assumption}



\subsection{Problem Statement}
\label{subsec:problem}

Let robot \(1\) be the leader, with desired position trajectory
\(
\pos_1^{d}:\nR{}_{\geq0}\rightarrow\nR{3}.
\)
The reference
\(
\pos_1^{d}(t)
\)
and its derivatives
are assumed available to the leader. Each follower
\(
i\in \followerset \define \vertices\setminus\{1\}
\)
is assigned a desired relative displacement
\(
\pos_{ij}^{d}\in\nR{3}
\)
with respect to its unique parent
\(
j\in\neigh_i.
\)
Define the leader tracking error and the relative formation errors as
\begin{equation}
    \tilde{\pos}_{1}
    \define
    \pos_1-\pos_1^{d},
    \qquad
    \tilde{\pos}_{ij}
    \define
    \pos_{ij}-\pos_{ij}^{d},
    \quad
    (i,j)\in\edges.
    \label{eq:formation_errors}
\end{equation}

Let
\(
d_{\min}>0
\)
denote the minimum admissible inter-robot distance. For each sensing edge
\(
(i,j)\in\edges,
\)
define the actual \emph{physical} admissible relative-pose domain as
\begin{equation}
    \mathcal{D}_{ij}
    \define
    \setparenthesis{
        \left(\pos_{ij},\quat_i\right)
        \ \middle|\
        \begin{aligned}
            d_{\min}
            &<
            d_{ij}
            <
            d_{\max},
            \qquad
            p_{x,ij}^{C}
            >
            0,
            \\
            \left|\alpha_{h,ij}\right|
            &<
            1,
            \qquad
            \left|\alpha_{v,ij}\right|
            <
            1
        \end{aligned}
    }.
    \label{eq:actual_admissible_domain}
\end{equation}

The nominal controller is designed over a conservative domain strictly contained in \(\mathcal{D}_{ij}\). Let
\begin{align}
    d_{\min}^{c}>d_{\min},
    \quad
    d_{\max}^{c}<d_{\max},
    \quad
    0<\alpha_h^{c}<1,
    \quad
    0<\alpha_v^{c}<1
    \label{eq:conservative_domain_parameters}
\end{align}
denote its boundaries, and define the corresponding set $\mathcal{D}_{ij}^{c}$.
By construction,
\(
\mathcal{D}_{ij}^{c}\subset\mathcal{D}_{ij}.
\)


\begin{assumption}
\label{ass:desired_formation_feasible}
For every sensing edge
\(
(i,j)\in\edges,
\)
the desired relative pose lies strictly inside
\(
\mathcal{D}_{ij}^{c}.
\)
Equivalently,
\begin{equation}
\begin{alignedat}{2}
    d_{\min}^{c}
    <
    d_{ij}^d
    &<
    d_{\max}^{c},
    \qquad&
    p_{x,ij}^{C,d}
    &>
    0,
    \\
    \left|\alpha_{h,ij}^{d}\right|
    &<
    \alpha_h^{c},
    \qquad&
    \left|\alpha_{v,ij}^{d}\right|
    &<
    \alpha_v^{c}.
\end{alignedat}
\label{eq:desired_formation_visibility}
\end{equation}
\end{assumption}

\begin{assumption}
\label{ass:initial_configuration_feasible}
For every sensing edge
\(
(i,j)\in\edges,
\)
the initial relative pose satisfies
\(
\left(\pos_{ij}(0),\quat_i(0)\right)
\in
\mathcal{D}_{ij}.
\)
\end{assumption}

\emph{\textbf{Problem.}}
{Consider the multi-robot system \eqref{eq:underwater_vehicle_model}
with sensing graph satisfying Assumption~\ref{ass:sensing_graph}.
Design decentralized controllers such that:
(i) the actuator constraints \eqref{eq:input_lims} are satisfied;
(ii) the physical constraints \eqref{eq:actual_admissible_domain} are
preserved whenever compatible with the available control authority, while
the conservative barrier domains are adaptively enlarged only within the
corresponding physical domains and recover toward their nominal values when
possible; and
(iii) tracking and formation errors remain bounded for time-varying
references and, under constant references and nominal conditions,
\( \lim_{t\to\infty}\norm{\tilde{\pos}_1(t)}=0,
\)
\(    \lim_{t\to\infty}\norm{\tilde{\pos}_{ij}(t)}=0,
\)
\(
    \forall (i,j)\in\edges.
\)
}

%% file: Sections/control.tex
The controller follows a command-filtered backstepping design. The generalized velocity \(\veltwist_i\) is first treated as a virtual input and a desired velocity \(\veltwist_{i,d}\) is constructed from the gradient of a BLF encoding formation, sensing, and collision-avoidance objectives. Command filtering~\cite{farrell2009command} avoids differentiating \(\veltwist_{i,d}\) and the resulting CLF derivative is affine in the thruster forces, allowing actuator bounds to be enforced directly in a QP with relaxed dissipation. In parallel, auxiliary dynamics adapt the BLF domains between their conservative and physical limits.

\subsection{Recentered Barrier Design}
\label{subsec:recentered_barriers}


We construct a composite BLF~\cite{tee2009barrier} encoding the formation
objective together with the sensing and collision-avoidance constraints.

{
\begin{definition}
    Let \(\mathcal{D}\) be an open set and let
    \(\mathcal{S}\subset\mathcal{D}\) denote a desired equilibrium set. A continuously differentiable function
    \( V:\mathcal{D}\rightarrow\mathbb{R}_{\geq 0} \)
    is a candidate barrier Lyapunov function with respect to
    \(\mathcal{S}\) if
    \(
        V(\boldsymbol{x})=0
    \), $\forall \boldsymbol{x} \in \mathcal{S}$;  
    \(
        V(\boldsymbol{x})>0
    \), 
    \(
        \forall\boldsymbol{x}\in
        \mathcal{D}\setminus\mathcal{S};
    \)
    and
    \( V(\boldsymbol{x})\rightarrow+\infty \) 
    as
    \( \boldsymbol{x}\rightarrow\partial\mathcal{D}. \)
\end{definition}
}

For each constraint in \eqref{eq:desired_formation_visibility}, we define a continuously differentiable scalar function whose positive values characterize admissible poses and whose zero level set defines the conservative boundary. We then associate with it a recentered logarithmic barrier term that diverges at the boundary while having zero value and zero gradient at the desired pose.
Specifically, let
\(
\mathcal{L}\define\{\delta,\Delta,h,v\}
\)
denote the set of constraint indices. For every sensing edge
\(
(i,j)\in\edges,
\)
define the conservative constraint functions
\begin{equation}
\label{eq:barrier_constraint_functions}
\begin{aligned}
    h_{\delta,ij}^{c}
    &\define
    d_{ij}
    -
    d_{\min}^{c}
    &
    h_{\Delta,ij}^{c}
    &\define
    d_{\max}^{c}
    -
    d_{ij},
    \\
    h_{h,ij}^{c}
    &\define
    \left(\alpha_h^{c}\right)^{2}
    -
    \alpha_{h,ij}^{2},
    &
    h_{v,ij}^{c}
    &\define
    \left(\alpha_v^{c}\right)^{2}
    -
    \alpha_{v,ij}^{2}.
\end{aligned}
\end{equation}
The conservative collision-avoidance and sensing constraints are satisfied whenever
\(
h_{\ell,ij}^{c} > 0
\)
for every
\(
\ell\in\mathcal{L}.
\)

\begin{proposition}
\label{prop:positive_camera_depth}
Suppose that
\(
\norm{\pos_{CB_i}}<d_{\min}
\)
and
\(
p_{x,ij}^{C}(0)>0.
\)
If
$d_{ij}>d_{\min}$,
    $\left|\alpha_{h,ij}(t)\right|<1$,
    $\left|\alpha_{v,ij}(t)\right|<1$
for all \(t\geq0\), then
\(
p_{x,ij}^{C}(t)>0
\)
for all \(t\geq0\).
\end{proposition}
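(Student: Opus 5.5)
The argument is a continuity (first-exit-time) argument on the scalar signal \(p_{x,ij}^{C}(t)\). The trajectories \(\pos_i,\pos_j,\quat_i\) are continuous in time, and the map \eqref{eq:relative_target_camera} is continuous. So \(\pos_{ij}^{C}(t)\), and in particular \(p_{x,ij}^{C}(t)\), is continuous. We argue by contradiction. Suppose \(p_{x,ij}^{C}\) is not positive for all \(t\geq0\). Since \(p_{x,ij}^{C}(0)>0\), the first crossing time
\[
t^\star\define\inf\{t\geq0 \mid p_{x,ij}^{C}(t)\leq0\}
\]
is well defined and positive. Continuity gives \(p_{x,ij}^{C}(t^\star)=0\) and \(p_{x,ij}^{C}(t)>0\) for all \(t\in[0,t^\star)\).

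Next we exploit the field-of-view hypothesis on \([0,t^\star)\). There the normalized coordinates \eqref{eq:normalized_image_coordinates} are well defined, and \(|\alpha_{h,ij}|<1\), \(|\alpha_{v,ij}|<1\) are equivalent to the multiplied forms
\[
|p_{y,ij}^{C}(t)|<p_{x,ij}^{C}(t)\tan(\theta_h^{\max}),\qquad |p_{z,ij}^{C}(t)|<p_{x,ij}^{C}(t)\tan(\theta_v^{\max}).
\]
We let \(t\uparrow t^\star\) and use continuity of both sides, which gives the non-strict limits \(|p_{y,ij}^{C}(t^\star)|\leq 0\) and \(|p_{z,ij}^{C}(t^\star)|\leq0\). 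Hence \(\pos_{ij}^{C}(t^\star)=\mathbf 0\).

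Finally we translate this back to the inter-robot distance. Since \(\rotMat_{CBi}\) is invertible, \eqref{eq:relative_target_camera} gives \(\pos_{ij}^{B}(t^\star)=\pos_{CBi}\). Because \(\rotMat_i\) is orthogonal,
\[
d_{ij}(t^\star)=\norm{\rotMat_i^\top\pos_{ij}(t^\star)}=\norm{\pos_{CBi}}<d_{\min}.
\]
This contradicts the hypothesis \(d_{ij}(t^\star)>d_{\min}\), so no such \(t^\star\) exists and \(p_{x,ij}^{C}(t)>0\) for all \(t\geq0\).

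The only delicate point is the interpretation of the hypothesis \(|\alpha_{h,ij}(t)|<1\) at or near \(t^\star\), where \(\alpha\) is undefined because \(p_{x,ij}^{C}=0\). To avoid this, the argument uses the hypothesis only on the open interval \([0,t^\star)\), where \(p_{x,ij}^{C}>0\). There we work with the equivalent multiplied form from \eqref{eq:camera_sensing_limits} and pass to the limit, which gives only non-strict inequalities but is enough to force \(p_{y,ij}^{C}=p_{z,ij}^{C}=0\).
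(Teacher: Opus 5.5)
Your proof is correct and uses the same first-crossing-time contradiction argument as the paper: the field-of-view bounds force \(p_{y,ij}^{C},p_{z,ij}^{C}\to0\) as \(p_{x,ij}^{C}\to0^{+}\), so \(\pos_{ij}^{B}\to\pos_{CBi}\) and \(d_{ij}\to\norm{\pos_{CBi}}<d_{\min}\). Your version is slightly more careful than the paper's: you use the normalized-coordinate bounds only on \([0,t^\star)\), where they are defined, and you obtain the contradiction at \(t^\star\) itself rather than at nearby times.
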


{
\begin{proof}
Suppose, by contradiction, that \(p_{x,ij}^{C}\) reaches zero for the first
time at \(t^\star\). Then
\(
p_{x,ij}^{C}(t)\to0^{+}
\)
as \(t\to t^{\star-}\).
Since
\(
|\alpha_{h,ij}|<1
\)
and
\(
|\alpha_{v,ij}|<1
\),
\eqref{eq:normalized_image_coordinates} implies
\(
p_{y,ij}^{C}(t),p_{z,ij}^{C}(t)\to0
\),
and hence
\(
\pos_{ij}^{C}(t)\to\boldsymbol{0}.
\)
From \eqref{eq:relative_target_camera},
\(
\pos_{ij}^{B}(t)\to\pos_{CB_i}
\),
so that
\[
    d_{ij}(t)
    =
    \norm{\pos_{ij}^{B}(t)}
    \to
    \norm{\pos_{CB_i}}
    <
    d_{\min}.
\]
Thus \(d_{ij}(t)<d_{\min}\) for \(t\) sufficiently close to \(t^\star\),
contradicting \(d_{ij}(t)>d_{\min}\). Therefore
\(
p_{x,ij}^{C}(t)>0
\)
for all \(t\geq0\).
\end{proof}
}

To allow for temporary relaxation of these conservative boundaries, we associate with each constraint a dynamic relaxation state
\(
    s_{\ell,i}(t)\in[0,1].
\)
Its dynamics will be introduced in Sec.~\ref{subsec:adaptive_barrier_domain}. For the moment, \(s_{\ell,i}\) is treated as a bounded parameter. 
We define the maximum admissible displacement of each conservative boundary toward its corresponding physical limit as
\begin{equation}
\label{eq:maximum_constraint_relaxations}
\begin{aligned}
    \bar{\rho}_{\delta}
    &\define
    d_{\min}^{c}
    -d_{\min},
    &
    \bar{\rho}_{\Delta}
    &\define
    d_{\max}
    -
    d_{\max}^{c},
    \\
    \bar{\rho}_{h}
    &\define
    1-\left(\alpha_h^{c}\right)^2,
    &
    \bar{\rho}_{v}
    &\define
    1-\left(\alpha_v^{c}\right)^2.
\end{aligned}
\end{equation}
Accordingly, we introduce the shifted constraint functions
\begin{equation}
\label{eq:adaptive_constraint_function}
    h_{\ell,ij}^{a}
    \define
    h_{\ell,ij}^{c}
    +
    \bar{\rho}_{\ell}s_{\ell,i},
    \qquad
    \ell\in\mathcal{L}.
\end{equation}
The corresponding constraint condition is therefore
\(
    h_{\ell,ij}^{a}>0
\) if and only if
\(
    h_{\ell,ij}^{c}
    >
    -\bar{\rho}_{\ell}s_{\ell,i}.
\)
For
\(
    s_{\ell,i}=0,
\)
this coincides with the nominal conservative constraint, whereas
\(
    s_{\ell,i}=1
\)
places the corresponding boundary at its physical limit. 

For each follower \(i\), with unique parent \(j\in\neigh_i\) (see Assumption~\ref{ass:sensing_graph}), define the
desired equilibrium set
\begin{equation}
\label{eq:follower_target_set}
\mathcal{S}_{ij}
\define
\left\{
    (\pos_{ij},\quat_i)
    \;\middle|\;
    \pos_{ij}=\pos_{ij}^{d},\;
    \alpha_{h,ij}=\alpha_{h,ij}^{d},\;
    \alpha_{v,ij}=\alpha_{v,ij}^{d}
\right\}.
\end{equation}
In general, \(\mathcal{S}_{ij}\) does not determine a unique orientation,
since rotations about the relative bearing may leave the regulated image
coordinates unchanged.

For every
\(
    \ell\in\mathcal{L},
\)
let
\(
    h_{\ell,ij}^{d,c}>0
\)
denote the value of
\(
    h_{\ell,ij}^{c}
\)
on the desired set
\(
    \mathcal{S}_{ij}.
\)
For the camera-dependent constraints, the desired image coordinates may, for
instance, be chosen as
\(
    \alpha_{h,ij}^{d}
    =
    \alpha_{v,ij}^{d}
    =
    0,
\)
corresponding to the parent being centered in the image. The positivity of
\(
    h_{\ell,ij}^{d,c}
\)
follows from
Assumption~\ref{ass:desired_formation_feasible}.
Let us define the shifted constraint function
\eqref{eq:adaptive_constraint_function} evaluated at the same desired
pose
\(
    h_{\ell,ij}^{d,a}
    \define
    h_{\ell,ij}^{d,c}
    +
    \bar{\rho}_{\ell}s_{\ell,i}.
\)


Following~\cite{tee2009barrier,restrepo2022robust,wills2002recentred,
feller2015weight}, for \(h,h^d>0\) define the recentered logarithmic
barrier
\begin{equation}
    \overline\beta(h;h^d)
    \define
    -\ln\!\left(\frac{h}{h^d}\right)
    +
    \frac{h}{h^d}
    -1 .
    \label{eq:recentered_barrier}
\end{equation}
It satisfies
\(
\overline\beta(h;h^d)\geq0
\),
with zero value and gradient at \(h=h^d\), while
\(
\overline\beta(h;h^d)\to+\infty
\)
as \(h\to0^+\) (see Fig.~\ref{fig:recentered_barriers}).
Accordingly, for every \(\ell\in\mathcal L\), define
\(
V_{\ell,ij}^{a}
\define
\overline\beta(h_{\ell,ij}^{a};h_{\ell,ij}^{d,a}).
\)
For fixed \(s_{\ell,i}\in[0,1]\), this barrier is minimized at the
desired pose and diverges at the adaptive boundary.

\begin{figure}[t]
    \centering
    \includegraphics[width=1.0\columnwidth]
    {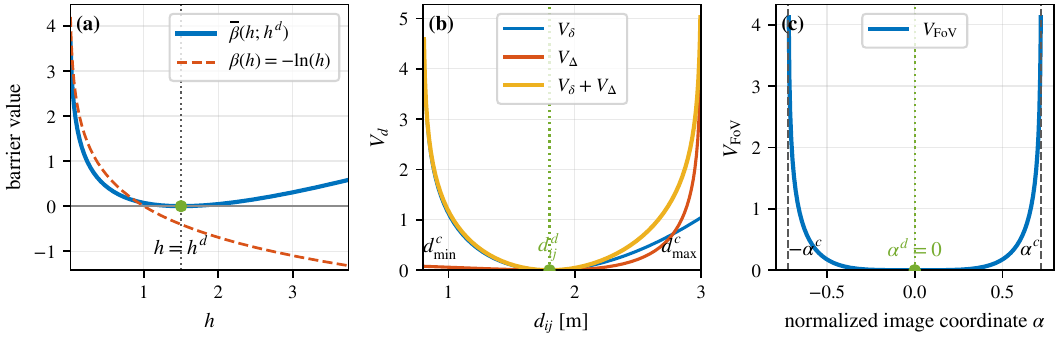}
    \caption{
        Recentered logarithmic barrier functions.
        (a) Comparison between the logarithmic barrier and its recentered
        counterpart. 
        (b) Recentered collision-avoidance and sensing-range barriers as
        functions of the inter-robot distance \(d_{ij}\).
        (c) Recentered field-of-view barrier as a function of the normalized
        image coordinate \(\alpha\).
    }
    \label{fig:recentered_barriers}
\end{figure}

For the leader, the control objective is to track the prescribed absolute
position reference. We therefore define
\begin{equation}\label{eq:leader_potential}
    V_1
    \define
    \frac{1}{2}
    k_{p,1}
    \norm{\tilde{\pos}_{1}}^2,
    \qquad
    k_{p,1}>0.
\end{equation}

For each follower
\(
    i\in\followerset,
\)
with unique parent
\(
    j\in\neigh_i
\), 
the objective is instead
expressed in terms of the desired relative pose. In addition to regulating
the relative position error
\(
    \tilde{\pos}_{ij},
\)
we introduce the image-centering potential
\begin{equation}
\label{eq:image_centering_potential}
    V_{\mathrm{cent},i}
    \define
    \frac{k_{h,i}}{2}
    \left(
        \alpha_{h,ij}-\alpha_{h,ij}^{d}
    \right)^2
    +
    \frac{k_{v,i}}{2}
    \left(
        \alpha_{v,ij}-\alpha_{v,ij}^{d}
    \right)^2,
\end{equation}
where
\(
    k_{h,i},k_{v,i}>0.
\)
The shifted composite potential for follower \(i\) is then
\begin{equation}
\label{eq:follower_adaptive_composite_potential}
    V_i
    \define
    \frac{1}{2}
    k_{p,i}
    \norm{\tilde{\pos}_{ij}}^2
    +
    V_{\mathrm{cent},i}
    +
    \sum_{\ell\in\mathcal{L}}
    \mu_{\ell,i}
    \recenteredBarrier_{\ell,ij}^{a},
\end{equation}
where
\(
    k_{p,i}>0
\)
and
\(
    \mu_{\ell,i}>0
\)
for every
\(
    \ell\in\mathcal{L}.
\)


{
\begin{proposition}[Critical points of the follower potential]
\label{prop:follower_potential_critical_points}
Consider a follower \(i\in\followerset\) with parent \(j\in\neigh_i\).
Suppose that
\(
    \alpha_{h,ij}^{d}
    =
    \alpha_{v,ij}^{d}
    =
    0.
\)
Then the desired set
\(\mathcal{S}_{ij}\) is the unique set of minimizers of \(V_i\).
Moreover, every critical point of \(V_i\) outside
\(\mathcal{S}_{ij}\) is a strict saddle.
\end{proposition}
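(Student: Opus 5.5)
Throughout, the relaxation states \(s_{\ell,i}\in[0,1]\) are frozen, and \(V_i\) is regarded as a function of \((\pos_{ij},\quat_i)\) on the adaptive domain where every \(h^{a}_{\ell,ij}>0\). The first claim, that \(\mathcal{S}_{ij}\) is the set of global minimizers, should be immediate. Each summand of \eqref{eq:follower_adaptive_composite_potential} is nonnegative, and \(\overline\beta(h;h^d)=0\) if and only if \(h=h^d\). On \(\mathcal{S}_{ij}\) we have \(\pos_{ij}=\pos^d_{ij}\) and \(\alpha_{h,ij}=\alpha_{v,ij}=0\). Hence every \(h^a_{\ell,ij}\) equals \(h^{d,a}_{\ell,ij}\) there, so \(V_i=0\). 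Conversely, \(V_i=0\) forces \(\tilde\pos_{ij}=\boldsymbol 0\) and \(\alpha_{h,ij}=\alpha_{v,ij}=0\), that is, membership in \(\mathcal{S}_{ij}\). For the saddle claim I would characterize all critical points, splitting the variables into the orientation \(\quat_i\) and the relative position \(\pos_{ij}\).

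\emph{Step 1 (orientation stationarity forces centering).} Because \(\alpha^d=0\), the camera-dependent part of \(V_i\) is \(g_h(\alpha_{h,ij}^2)+g_v(\alpha_{v,ij}^2)\), where
\(g_h(a)=\tfrac{k_{h,i}}{2}a+\mu_{h,i}\overline\beta\bigl((\alpha_h^c)^2+\bar\rho_h s_{h,i}-a;\,h^{d,a}_{h,ij}\bigr)\), and \(g_v\) is defined analogously. Since \(\partial_h\overline\beta=1/h^d-1/h\le0\) for \(h\le h^d\), the function \(g_h\) has \(g_h'(a)\ge k_{h,i}/2>0\). The distance barriers and \(\tilde\pos_{ij}\) do not depend on \(\quat_i\). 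Therefore \(\partial_{\quat_i}V_i=0\) reduces to a linear combination of \(d\alpha_{h,ij}\) and \(d\alpha_{v,ij}\) with coefficients \(2g_h'\alpha_{h,ij}\) and \(2g_v'\alpha_{v,ij}\).

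I would then show that the differential of \((\alpha_{h,ij},\alpha_{v,ij})\) with respect to rotations is surjective. An infinitesimal rotation \(\vect\omega\) changes \(\vect b\define\pos^B_{ij}=\rotMat_i^\top\pos_{ij}\) by \(-[\vect\omega]_\times\vect b\), which sweeps the whole plane orthogonal to \(\vect b\). The image coordinates depend only on the direction of \(\pos^C_{ij}=\rotMat_{CBi}(\vect b-\pos_{CBi})\). The rotation therefore reaches both image directions unless \(\vect b-\pos_{CBi}\) is orthogonal to \(\vect b\). That case is excluded because \(\vect b^\top(\vect b-\pos_{CBi})\ge d_{ij}^2-d_{ij}\norm{\pos_{CBi}}>0\), using \(d_{ij}>d_{\min}>\norm{\pos_{CBi}}\) as in Proposition~\ref{prop:positive_camera_depth}. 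Consequently every critical point satisfies \(\alpha_{h,ij}=\alpha_{v,ij}=0\).

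\emph{Step 2 (translational stationarity).} At \(\alpha=0\), the \(\pos_{ij}\)-gradient of the centering and field-of-view terms vanishes, since \(g_h,g_v\) are applied to \(\alpha^2\). What remains is
\(k_{p,i}(\pos_{ij}-\pos^d_{ij})+\varphi'(d_{ij})\,\pos_{ij}/d_{ij}=\boldsymbol 0\), where \(\varphi\) is the weighted sum of the \(\delta\) and \(\Delta\) barriers. Hence \(\pos_{ij}\) is collinear with \(\pos^d_{ij}\), which leaves two cases.

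If \(\pos_{ij}\) is parallel to \(\pos^d_{ij}\), the problem reduces along the ray to \(\tfrac{k_{p,i}}{2}(d-d^d_{ij})^2+\varphi(d)\). This function is strictly convex, because \(\overline\beta\) is convex in \(h\) and each \(h\) is affine in \(d\). It is minimized at \(d=d^d_{ij}\), since \(\varphi\) also has its minimum there. So this case gives exactly \(\mathcal{S}_{ij}\).

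If \(\pos_{ij}=-t\,\pos^d_{ij}/\norm{\pos^d_{ij}}\) with \(t>0\), the point is a genuine spurious critical point and must be shown to be a strict saddle.

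\emph{Step 3 (negative curvature at the antipodal points).} This is the step I expect to be the main obstacle. The difficulty is to exhibit a curve along which only the quadratic term varies. I would rotate \(\pos_{ij}\) and the frame \(\rotMat_i\) together about an axis through the origin orthogonal to \(\pos^d_{ij}\). This motion keeps \(\pos^B_{ij}\) fixed, hence also \(\pos^C_{ij}\), \(\alpha_{h,ij}\), \(\alpha_{v,ij}\) and \(d_{ij}\). Along it, \(V_i\) changes only through \(\tfrac{k_{p,i}}{2}\norm{\pos_{ij}-\pos^d_{ij}}^2\). Restricted to the sphere of radius \(t\), this term attains its strict maximum at the antipodal point, so its second derivative along the curve equals \(-k_{p,i}t\norm{\pos^d_{ij}}<0\). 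This gives a direction of strictly negative curvature of the Hessian, hence a strict saddle, which completes the proof. The delicate points to verify carefully are the surjectivity argument of Step 1 in the presence of the camera offset \(\pos_{CBi}\), and the handling of the non-isolated orientation freedom, namely rotations about the bearing that leave \(\mathcal{S}_{ij}\) degenerate but do not affect the classification.
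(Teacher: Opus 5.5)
Your proof is correct and follows essentially the same route as the paper. Both arguments use nonnegativity of the summands for the minimizers, a rank-two orientation Jacobian that forces $\alpha_{h,ij}=\alpha_{v,ij}=0$, strict convexity of the distance barriers along the ray, and the same simultaneous rotation of $\pos_{ij}$ and $\rotMat_i$ giving second derivative $-k_{p,i}d_{ij}d_{ij}^{d}<0$. Where you differ, you are slightly more careful. You check the rank-two property directly while accounting for the camera offset, which the paper's identification with the rotational block of the interaction matrix glosses over; this is why you need $\norm{\pos_{CBi}}<d_{\min}$ from Proposition~\ref{prop:positive_camera_depth}. You also classify the translational critical points by a self-contained collinearity argument instead of citing~\cite{restrepo2022robust}.
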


\begin{proof}
All terms composing \(V_i\) are nonnegative and vanish simultaneously on
\(\mathcal{S}_{ij}\); hence \(\mathcal{S}_{ij}\) is its unique minimum set.

Consider now a critical point of \(V_i\).
For either image coordinate
\(
\alpha\in\{\alpha_{h,ij},\alpha_{v,ij}\}
\), using 
\eqref{eq:recentered_barrier},
\eqref{eq:image_centering_potential}, \eqref{eq:follower_adaptive_composite_potential},
the corresponding image-centering and barrier terms satisfy
\[
    \frac{\partial}{\partial\alpha}
    \left[
        \frac{k}{2}\alpha^2
        +
        \mu\overline{\beta}(h(\alpha);h^d)
    \right]
    =
    \alpha
    \left(
        k+
        \frac{2\mu\alpha^2}{h(\alpha)h^d}
    \right).
\]
Since the term in parentheses is strictly positive inside the barrier
domain, this derivative vanishes if and only if \(\alpha=0\).


Up to the nonsingular scaling diagonal matrix
\(
\diag(\tan(\theta_h^{\max}),\tan(\theta_v^{\max}))^{-1}
\),
the Jacobian
\(\matr J_{\alpha,R}\) of
\(\col(\alpha_{h,ij},\alpha_{v,ij})\)
with respect to the robot orientation coincides with the rotational block
of the standard point-feature interaction matrix~\cite{chaumette2006visual}.
Hence,
\(
\operatorname{rank}(\matr J_{\alpha,R})=2.
\)
Since
\(
\nabla_{\rotMat_i}V_i
=
\matr J_{\alpha,R}^{\top}
\col(
{\partial V_i}/{\partial\alpha_{h,ij}},
{\partial V_i}/{\partial\alpha_{v,ij}})
\),
the full column rank of
\(\matr J_{\alpha,R}^{\top}\) implies that
\(
\nabla_{\rotMat_i}V_i=\zeros{}
\)
only if
\(
{\partial V_i}/{\partial\alpha_{h,ij}}
=
{\partial V_i}/{\partial\alpha_{v,ij}}
=0,
\)
and therefore
\(
\alpha_{h,ij}=\alpha_{v,ij}=0.
\)
%
At these values, the image-dependent terms also have zero translational gradient. It therefore remains to characterize the critical points of the
formation-error and distance-dependent terms. Define
\(
    U_i(d_{ij})
    \define
    \mu_{\delta,i}\recenteredBarrier_{\delta,ij}
    +
    \mu_{\Delta,i}\recenteredBarrier_{\Delta,ij}.
\)
Since
\(
    \partial^2\overline{\beta}/\partial h^2
    =
    1/h^2>0,
\)
the function
\(
    \overline{\beta}(h;h^d)
\)
is strictly convex in \(h\). Moreover, the distance constraint functions
\(
    h_{\delta,ij}(d_{ij})
\)
and
\(
    h_{\Delta,ij}(d_{ij})
\)
are affine functions of \(d_{ij}\). Hence, their compositions
with the recentered barrier are strictly convex
\cite[Section~3.2.2]{boyd2004convex}, and so is their positive weighted sum
\(U_i(d_{ij})\). Since
\(
    \partial U_i/\partial d_{ij}=0
\)
at
\(
    d_{ij}=d_{ij}^{d},
\)
this point is the unique minimizer of \(U_i\).

The resulting translational potential has the same annular critical-point
structure as the constrained formation potential analyzed in
\cite[Appendix~I]{restrepo2022robust}. In particular,
\(
    \pos_{ij}=\pos_{ij}^{d}
\)
is its unique minimum, while there exists a unique critical relative position on the ray opposite to \(\pos_{ij}^{d}\), which is a saddle.

It remains only to verify that the image-dependent terms do not remove the
saddle property. At the undesired translational critical point,
\(
    \pos_{ij}
    =
    -({d_{ij}}/{d_{ij}^{d}})\pos_{ij}^{d}.
\)
Consider a small rotation
\(
    \matr{Q}(\varepsilon)\in\mathrm{SO}(3)
\)
about any axis orthogonal to
\(
    \pos_{ij}^{d}
\),
with
\(
    \matr{Q}(0)=\eye{3}.
\)
Perturb the relative position and the robot orientation simultaneously as
\(
    \pos_{ij}(\varepsilon)
    =
    \matr{Q}(\varepsilon)\pos_{ij},
\) 
\(
    \rotMat_i(\varepsilon)
    =
    \matr{Q}(\varepsilon)\rotMat_i.
\)
Since rotations preserve norms,
\(
    \norm{\pos_{ij}(\varepsilon)}
    =
    d_{ij},
\)
and therefore all distance-dependent barrier terms remain constant.
Moreover,
\(
    \rotMat_i(\varepsilon)^{\top}\pos_{ij}(\varepsilon)
    =
    \rotMat_i^{\top}\pos_{ij},
\)
so the relative position expressed in the body frame, and hence in the
camera frame, is unchanged. Consequently,
\(
    \alpha_{h,ij}
\)
and
\(
    \alpha_{v,ij}
\)
remain constant as well.

Thus, along this variation, only the formation-error term changes. Using
the antipodal relation above,
\[
    \frac{k_{p,i}}{2}
    \norm{
        \pos_{ij}(\varepsilon)-\pos_{ij}^{d}
    }^{2}
    =
    \frac{k_{p,i}}{2}
    \left(
        d_{ij}^{2}
        +
        (d_{ij}^{d})^{2}
        +
        2d_{ij}d_{ij}^{d}\cos\varepsilon
    \right),
\]
and therefore
\(
    \left.
    {\mathrm{d}^{2}V_i}/{\mathrm{d}\varepsilon^{2}}
    \right|_{\varepsilon=0}
    =
    -k_{p,i}d_{ij}d_{ij}^{d}
    <0.
\)
Hence the undesired critical point has a direction of strictly negative
curvature and is therefore a strict saddle. Notice that one rotational degree of freedom remains unconstrained. Hence, for a fixed critical relative position, the corresponding critical orientations form a one-dimensional manifold in \(\mathrm{SO}(3)\).

\end{proof}
}

In the nominal case
\(
    \vect{s}_i=\boldsymbol{0},
\)
boundedness of
\(
    V_i
\)
along a closed-loop trajectory initialized inside the conservative domain
precludes any
\(
    h_{\ell,ij}^{c}
\)
from approaching zero, since the corresponding barrier term diverges at the
boundary. This yields forward invariance of the nominal conservative domain \cite[Lemma 1]{tee2009barrier}.
In the adaptive construction, the relaxation states are instead allowed to
evolve, so that
\(
    h_{\ell,ij}^{c}
\)
may become negative while
\(
    h_{\ell,ij}^{a}>0
\)
is maintained. The auxiliary dynamics governing
\(
    s_{\ell,i}
\)
and ensuring this property are introduced in
Sec.~\ref{subsec:adaptive_barrier_domain}.


\subsection{Command-Filtered Backstepping}
\label{subsec:command_filtered_backstepping}

The potentials $V_i$ introduced in \eqref{eq:leader_potential} and \eqref{eq:follower_adaptive_composite_potential} depend on the robot poses, whereas the control wrench enters the dynamics at the generalized-acceleration level. We therefore employ a backstepping construction in which the generalized velocity is first treated as a virtual control input for the subsystem \eqref{eq:underwater_vehicle_model_kinematics}.


Define the geometric body-frame gradient of the barrier $V_i$ as \(\vect{\zeta}_i 
    \define
    \col\left(
            \vect{\zeta}_{p,i},
            \vect{\zeta}_{R,i}
        \right)\), with 
\begin{equation}
    \vect{\zeta}_{p,i}
    \define
    \rotMat_i^{\top}\nabla_{\pos_i}V_i,
    \qquad
    \vect{\zeta}_{R,i}
    \define
    2
    \left[
        \operatorname{skew}
        \left(
            \rotMat_i^{\top}
            \frac{\partial V_i}{\partial\rotMat_i}
        \right)
    \right]^{\vee}.
    \label{eq:potential_gradients}
\end{equation}

The reference, parent motion, and relaxation-state dynamics contribute to
\(\dot V_i\) through the terms
\begin{equation}
\begin{aligned}
    \veltwist_{i,\mathrm{ff}}
    &\define
    \begin{cases}
        \col\!\left(
            \rotMat_1^{\top}\dot{\pos}_1^{d},
            \boldsymbol{0}
        \right), & i=1,\\
        \boldsymbol{0}, & i\in\followerset,
    \end{cases}
    \\
    \chi_i
    &\define
    \begin{cases}
        0, & i=1,\\
        (\nabla_{\pos_j}V_i)^{\top}\rotMat_j\vel_j,
        & i\in\followerset,
    \end{cases}
    \\
    \psi_i
    &\define
    \begin{cases}
        0, & i=1,\\
        (\nabla_{\vect{s}_i}V_i)^{\top}\dot{\vect{s}}_i,
        & i\in\followerset,
    \end{cases}
\end{aligned}
\label{eq:potential_derivative_auxiliary_terms}
\end{equation}
where \(j\in\neigh_i\) is the parent of follower \(i\).
The properties of \(\psi_i\) induced by the relaxation dynamics are analyzed
in Sec.~\ref{subsec:adaptive_barrier_domain}.

Using the vehicle kinematics \eqref{eq:underwater_vehicle_model_kinematics} and \eqref{eq:potential_gradients}, the total derivative of the pose potential $V_i$ can therefore be written as
{
\begin{equation}
\begin{aligned}
\dot V_i
&=
\nabla_{\pos_i}V_i^\top\rotMat_i(\vel_i - \vel_{i,ff})
\\
&
+
\operatorname{tr}
\left[
\left(
\frac{\partial V_i}{\partial\rotMat_i}
\right)^\top
\rotMat_i[\angvel_i-\angvel_{i,\mathrm{ff}}]_\times
\right]
+
    \chi_i
    +
    \psi_i
\\
&=
\vect{\zeta}_i^\top\left(
        \veltwist_i-\veltwist_{i,\mathrm{ff}}
    \right) +
    \chi_i
    +
    \psi_i,
\end{aligned}
\label{eq:local_potential_derivative}
\end{equation}
where the identity
\(
\operatorname{tr}(\matr A^\top[\vect x]_\times)
=
2[\operatorname{skew}(\matr A)]^{\vee\top}\vect x
\)
has been used.}
The leader feedforward term in \eqref{eq:potential_derivative_auxiliary_terms} is directly available from the reference trajectory. For each follower, however, the available inter-agent information is limited to the relative position of its parent obtained from onboard camera measurements. In particular, the parent velocity is not assumed to be directly measurable, and no inter-robot communication is required. Consequently, \(\chi_i\) cannot be canceled by the controller and it remains as an interconnection term between the subsystems.

Treating
\(
    \veltwist_i
\)
as a virtual control input, define the desired generalized velocity as
\begin{equation}
    \veltwist_{i,d}
    \define
    \veltwist_{i,\mathrm{ff}}
    -
    \matr{K}_{\eta,i}\vect{\zeta}_i,
    \label{eq:desired_generalized_velocity}
\end{equation}
where
\(
    \matr{K}_{\eta,i}
    =
    \matr{K}_{\eta,i}^{\top}
    \succ
    \boldsymbol{0}.
\)
If
\(
    \veltwist_i=\veltwist_{i,d},
\)
then
\(
    \dot V_i
    =
    -
    \vect{\zeta}_{i}^{\top}
    \matr{K}_{\eta,i}
    \vect{\zeta}_{i}
    +
    \chi_i
    +
    \psi_i.
\)
Thus, the virtual control introduces a negative-definite term in \(\vect{\zeta}_i\), while $\chi_i$
    and $\psi_i$ remain as disturbances.

A direct backstepping implementation would require
\(
    \dot{\veltwist}_{i,d},
\)
and hence differentiation of the potential gradient
\(
    \vect{\zeta}_i.
\)
For follower robots, these derivatives involve the parent velocity, which is not directly measured. To avoid the unavailable derivative, $\veltwist_{i,d}$ is passed through the first-order command filter
\begin{equation}
    \dot{\veltwist}_{i,f}
    =
    -
    \matr{\Lambda}_i
    \left(
        \veltwist_{i,f}
        -
        \veltwist_{i,d}
    \right),
    \qquad
    \veltwist_{i,f}(0)
    =
    \veltwist_{i,d}(0),
    \label{eq:generalized_velocity_command_filter}
\end{equation}
where
\(
    \matr{\Lambda}_i
    =
    \matr{\Lambda}_i^{\top}
    \succ
    \boldsymbol{0}.
\)
The derivative
\(
    \dot{\veltwist}_{i,f}
\)
is therefore directly available from \eqref{eq:generalized_velocity_command_filter}, without differentiating $\veltwist_{i,d}$.

Define the command-filter error and generalized-velocity tracking error as
\(
    \tilde{\veltwist}_{i}
    \define
    \veltwist_{i,f}
    -
    \veltwist_{i,d},
\)
and 
\(
    \vect{e}_{\veltwist,i}
    \define
    \veltwist_i
    -
    \veltwist_{i,f}.
\)
It follows from \eqref{eq:desired_generalized_velocity} that
\(
    \veltwist_i
    =
    \veltwist_{i,\mathrm{ff}}
    -
    \matr{K}_{\eta,i}\vect{\zeta}_i
    +
    \tilde{\veltwist}_{i}
    +
    \vect{e}_{\veltwist,i}.
\)
Substitution into \eqref{eq:local_potential_derivative} yields
\begin{equation}
    \dot{V}_i
    =
    -
    \vect{\zeta}_i^{\top}
    \matr{K}_{\eta,i}
    \vect{\zeta}_i
    +
    \vect{\zeta}_i^{\top}
    \tilde{\veltwist}_{i}
    +
    \vect{\zeta}_i^{\top}
    \vect{e}_{\veltwist,i}
    +
    \chi_i
    +
    \psi_i.
    \label{eq:filtered_potential_derivative}
\end{equation}

To backstep through the vehicle dynamics, consider the Lyapunov candidate
\begin{equation}
    W_i
    \define
    V_i
    +
    \frac{1}{2}
    \vect{e}_{\veltwist,i}^{\top}
    \inertiaM_i
    \vect{e}_{\veltwist,i}.
    \label{eq:backstepping_clf}
\end{equation}

For compactness, define
\(
    \vect{h}_i(\pose_i,\veltwist_i)
    \define
    \coriolis_i(\veltwist_i)\veltwist_i
    +
    \damping_i(\veltwist_i)\veltwist_i
    +
    \gravitywrench_i(\pose_i).
    \label{eq:vehicle_drift_compact}
\)
Using the vehicle dynamics
\eqref{eq:underwater_vehicle_model_dynamics}, the generalized-velocity
tracking error satisfies
\begin{equation}
    \inertiaM_i
    \dot{\vect{e}}_{\veltwist,i}
    =
    \matr{B}_i\vect{f}_i
    -
    \vect{h}_i
    -
    \inertiaM_i
    \dot{\veltwist}_{i,f}.
    \label{eq:generalized_velocity_error_dynamics}
\end{equation}

Combining \eqref{eq:filtered_potential_derivative}, \eqref{eq:backstepping_clf} and
\eqref{eq:generalized_velocity_error_dynamics} gives
\begin{equation}
    \dot W_i
    =
    -
    \vect{\zeta}_i^{\top}
    \matr{K}_{\eta,i}
    \vect{\zeta}_i
    +
    \vect{\zeta}_i^{\top}
    \tilde{\veltwist}_i
    +
    a_i
    +
    \vect{b}_i^{\top}\vect{f}_i
    +
    \chi_i
    +
    \psi_i,
    \label{eq:affine_clf_derivative_thrusters}
\end{equation}
where
\begin{equation}
\begin{aligned}
    a_i
    &\define
    \vect{\zeta}_i^{\top}
    \vect{e}_{\veltwist,i}
    -
    \vect{e}_{\veltwist,i}^{\top}
    \left(
        \vect{h}_i
        +
        \inertiaM_i
        \dot{\veltwist}_{i,f}
    \right),
    \\
    \vect{b}_i
    &\define
    \matr{B}_i^{\top}
    \vect{e}_{\veltwist,i}.
\end{aligned}
\label{eq:clf_nominal_drift}
\end{equation}
The locally available term
\(
    a_i+\vect{b}_i^{\top}\vect{f}_i
\)
is affine in the thruster forces and is used in the following subsection to
impose the desired dissipation associated with the generalized-velocity
tracking error. The terms involving
\(
\vect{\zeta}_i
\),
the command-filter error
\(
\tilde{\veltwist}_i
\),
and the interconnection term
\(
\chi_i
\)
are retained explicitly in the resulting CLF derivative.


\subsection{CLF-QP-Based Control Design}
\label{subsec:clf_qp}

The command-filtered backstepping design of
Sec.~\ref{subsec:command_filtered_backstepping} yields a Lyapunov derivative
affine in the thruster forces. We therefore impose the desired dissipation
directly through a constrained optimization problem that accounts for the
actuator limits without canceling nonlinearities.

We impose the relaxed CLF condition
\(
    a_i
    +
    \vect{b}_i^{\top}\vect{f}_i
    \leq
    -
    \vect{e}_{\veltwist,i}^{\top}
    \matr{K}_{e,i}
    \vect{e}_{\veltwist,i}
    +
    \delta_i,
\)
where
\(
    \matr{K}_{e,i}
    =
    \matr{K}_{e,i}^{\top}
    \succ\boldsymbol{0}
\)
and
\(
    \delta_i\geq0
\)
is a CLF relaxation variable.

The following result characterizes the nominal closed-loop behavior when the
CLF condition can be enforced without relaxation and the adaptive-domain
dynamics are inactive.

{
\begin{proposition}[Nominal convergence and robustness]
\label{prop:nominal_convergence_robustness}
Consider a follower \(i\in\followerset\) in the nominal case
\(\vect{s}_i\equiv\boldsymbol{0}\),
and suppose that the CLF condition is satisfied with zero relaxation, i.e.
\begin{equation}
    a_i
    +
    \vect{b}_i^{\top}\vect{f}_i
    \leq
    -
    \vect{e}_{\veltwist,i}^{\top}
    \matr{K}_{e,i}
    \vect{e}_{\veltwist,i}.
    \label{eq:zero_slack_clf_constraint}
\end{equation}
Let
\(
    \mathcal{A}_i
    \define
    \left\{
        (\pose,\vect{e}_{\veltwist,i})
        \;\middle|\;
        \pose\in\mathcal{S}_{ij},
        \;
        \vect{e}_{\veltwist,i}=\boldsymbol{0}
    \right\}.
\)
Suppose that the hypotheses of
Proposition~\ref{prop:follower_potential_critical_points} hold.
If
\(
\tilde{\veltwist}_i=\boldsymbol{0}
\)
and 
$\vel_j = \zeros{}$, then \(\mathcal{A}_i\) is asymptotically stable and almost globally attractive relative to the barrier domain. 
Moreover, there exists a neighborhood of
\(\mathcal{A}_i\) in which the closed-loop subsystem is input-to-state stable (ISS)
with respect to
\(
    \vect{d}_i
    \define
    \col
    \left(
        \tilde{\veltwist}_i,
        \vel_j
    \right).
\)
\end{proposition}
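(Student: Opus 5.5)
The plan is to use \(W_i\) from \eqref{eq:backstepping_clf} as the Lyapunov function for the follower subsystem, with state \((\pose_i\text{-relative pose},\vect{e}_{\veltwist,i})\). In the nominal case \(\vect{s}_i\equiv\boldsymbol{0}\) we have \(\psi_i=0\). With \(\vel_j=\zeros{}\), the interconnection term \(\chi_i=(\nabla_{\pos_j}V_i)^\top\rotMat_j\vel_j\) vanishes. Substituting the zero-slack condition \eqref{eq:zero_slack_clf_constraint} into \eqref{eq:affine_clf_derivative_thrusters} with \(\tilde{\veltwist}_i=\boldsymbol{0}\) then gives
\[
\dot W_i\le -\vect{\zeta}_i^\top\matr{K}_{\eta,i}\vect{\zeta}_i-\vect{e}_{\veltwist,i}^\top\matr{K}_{e,i}\vect{e}_{\veltwist,i}\le 0 .
\]

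\textbf{Invariance and attractivity.} Boundedness of \(W_i\) along trajectories bounds \(V_i\). Since the barrier terms diverge at the conservative boundary, the domain is forward invariant and the trajectories stay in a compact sublevel set, as in \cite[Lemma 1]{tee2009barrier}. Note that \(V_i\) is coercive relative to the domain, with the quadratic term handling unbounded \(\pos_{ij}\) and \(\mathrm{SO}(3)\) being compact. LaSalle's invariance principle then drives trajectories to the largest invariant set contained in \(\{\vect{\zeta}_i=\boldsymbol{0},\ \vect{e}_{\veltwist,i}=\boldsymbol{0}\}\). With the parent at rest, \(\vect{\zeta}_i=\boldsymbol{0}\) means \(\nabla V_i=0\), i.e.\ critical points of \(V_i\). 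By Proposition~\ref{prop:follower_potential_critical_points}, these are either \(\mathcal{S}_{ij}\) or strict saddles.

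\textbf{Stability and almost-global attractivity.} Asymptotic stability of \(\mathcal{A}_i\) follows because \(W_i\) is positive definite with respect to \(\mathcal{A}_i\) (its zero set is exactly \(\mathcal{A}_i\)) and nonincreasing. The undesired critical set is a strict saddle of \(V_i\). Linearizing the closed loop \(\veltwist_i\approx-\matr{K}_{\eta,i}\vect{\zeta}_i+\vect{e}_{\veltwist,i}\) there exhibits an unstable eigenvalue inherited from the negative-curvature direction. By the stable manifold theorem, its region of attraction then has zero measure.

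This is the main obstacle, for three reasons. First, the closed loop is defined only through a QP inequality, not an explicit vector field. Second, the saddle set is a one-dimensional manifold of orientations rather than an isolated point. Third, a trajectory is guaranteed to approach the saddle set only as a set, not a single saddle point. I would address these by assuming the QP solution is locally Lipschitz and by invoking a center–stable manifold argument for normally hyperbolic critical manifolds.

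\textbf{ISS.} With \(\vect{d}_i\neq\boldsymbol{0}\), the extra terms are \(\vect{\zeta}_i^\top\tilde{\veltwist}_i\) and \(\chi_i\). Both are bounded by \(c\norm{\nabla V_i}\norm{\vect{d}_i}\) on a compact neighborhood of \(\mathcal{A}_i\) strictly inside the domain, where the gradients are bounded. Near \(\mathcal{S}_{ij}\), the potential \(V_i\) is locally quadratic-like in the distance to the set: the Hessian is positive definite transversally, using the rank argument from Proposition~\ref{prop:follower_potential_critical_points}. So \(\norm{\vect{\zeta}_i}\) is comparable to \(\sqrt{V_i}\). Young's inequality then gives
\[
\dot W_i\le -c_1 W_i + c_2\norm{\vect{d}_i}^2
\]
locally, which establishes local ISS via the standard Lyapunov characterization.
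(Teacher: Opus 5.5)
Your proposal is correct and follows the paper's proof essentially step for step. Both substitute the zero-slack CLF condition with $\psi_i=\chi_i=0$, obtain compact sublevel sets from barrier divergence, apply LaSalle together with Proposition~\ref{prop:follower_potential_critical_points} and a stable-manifold argument to discard the saddles, and derive local ISS from transversal nondegeneracy of $\mathcal{S}_{ij}$, the bound $|\chi_i|\le\norm{\vect{\zeta}_i}\norm{\vel_j}$, and Young's inequality. Your treatment of the saddle-exclusion step is more candid than the paper's, which simply cites Hirsch--Pugh--Shub under unstated nondegeneracy conditions. However, local Lipschitzness of the QP feedback alone does not meet the $C^1$ hypotheses of those manifold theorems, and the feedback can fail to be differentiable at the saddle because the CLF constraint switches between active and inactive arbitrarily close to it. So in both versions that step, including your proposed linearization, rests on an additional regularity assumption.
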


\begin{proof}
Since
\(
{\vect{s}}_i=\boldsymbol{0},
\)
the adaptive-domain contribution satisfies
\(
\psi_i=0.
\)
Substituting
\eqref{eq:zero_slack_clf_constraint}
into
\eqref{eq:affine_clf_derivative_thrusters}
gives
\begin{equation}
\begin{aligned}
    \dot W_i
    \leq\;&
    -
    \vect{\zeta}_i^{\top}
    \matr{K}_{\eta,i}
    \vect{\zeta}_i
    -
    \vect{e}_{\veltwist,i}^{\top}
    \matr{K}_{e,i}
    \vect{e}_{\veltwist,i}
    +
    \vect{\zeta}_i^{\top}
    \tilde{\veltwist}_i
    +
    \chi_i.
    \label{eq:zero_slack_clf_bound}
\end{aligned}
\end{equation}

Consider first the zero-input case
\(
\tilde{\veltwist}_i=\boldsymbol{0}
\)
and
\(
\vel_j=\boldsymbol{0}
\), which implies $\chi_i=0$.
Then $\dot W_i\le 0$.
Since the recentered barriers diverge as the boundary of the barrier domain
is approached, every finite sublevel set of \(W_i\) is bounded away from that
boundary. Consequently, a trajectory initialized in the barrier domain
remains in a compact sublevel set of \(W_i\), i.e. it satisfies the constraints \eqref{eq:conservative_domain_parameters}.
By LaSalle's invariance principle \cite[Thm. 4.4]{khalil2002nonlinear}, every trajectory approaches the largest
invariant subset of
\(
    \left\{
        \vect{\zeta}_i=\boldsymbol{0},
        \;
        \vect{e}_{\veltwist,i}=\boldsymbol{0}
    \right\}.
\)
Proposition~\ref{prop:follower_potential_critical_points} establishes that
the critical set of \(V_i\) consists of the desired minimum set
\(\mathcal{S}_{ij}\) and, possibly, undesired strict-saddle critical points.
Therefore, except for initial conditions belonging to the stable manifolds of the undesired saddle sets, which have zero Lebesgue measure under the standard nondegeneracy conditions~\cite[Thm.~4.1]{hirsch1977invariant}, trajectories converge to \(\mathcal{A}_i\), yielding almost-global attractivity relative to the barrier domain.

We next establish the local robustness property. By the local structure of
\(V_i\) around its minimum set, \(\mathcal{S}_{ij}\) is a nondegenerate
minimum manifold, i.e., the Hessian of \(V_i\) is positive definite in the
directions normal to \(\mathcal{S}_{ij}\). Hence, in a sufficiently small
neighborhood of \(\mathcal{A}_i\), there exist positive constants
\(c_{\zeta,i}\), \(c_{1,i}\), and \(c_{2,i}\) such
that~\cite[Props.~2.3 and~2.8]{rebjock2025fast}
\begin{equation}
\begin{gathered}
    \norm{\vect{\zeta}_i}^{2}
    \geq
    c_{\zeta,i}
    \left|
        \pose_i
    \right|_{\mathcal{S}_{ij}}^{2},
    \quad
    c_{1,i}
    \left|
        \vect{x}_i
    \right|_{\mathcal{A}_i}^{2}
    \leq
    W_i
    \leq
    c_{2,i}
    \left|
        \vect{x}_i
    \right|_{\mathcal{A}_i}^{2}.
    \label{eq:local_clf_bounds}
\end{gathered}
\end{equation}
For a follower,
\(
\norm{\nabla_{\pos_j}V_i}
=
\norm{\vect{\zeta}_{p,i}}
\),
and therefore
\(
    |\chi_i|
    \leq
    \norm{\vect{\zeta}_i}\norm{\vel_j}.
\)
Let
\(
\underline{k}_{\eta,i}
=
\lambda_{\min}(\matr{K}_{\eta,i})
\)
and
\(
\underline{k}_{e,i}
=
\lambda_{\min}(\matr{K}_{e,i})
\).
Using the preceding bound and
\(
\norm{\tilde{\veltwist}_i}+\norm{\vel_j}
\leq
\sqrt{2}\norm{\vect{d}_i},
\)
Young's inequality gives
\begin{equation}
    \dot W_i
    \leq
    -
    \frac{\underline{k}_{\eta,i}}{2}
    \norm{\vect{\zeta}_i}^{2}
    -
    \underline{k}_{e,i}
    \norm{\vect{e}_{\veltwist,i}}^{2}
    +
    \frac{1}{\underline{k}_{\eta,i}}
    \norm{\vect{d}_i}^{2}.
    \label{eq:local_iss_clf_bound}
\end{equation}
Together with \eqref{eq:local_clf_bounds}, this is a local ISS-Lyapunov
inequality \cite[Sec. 4.9]{khalil2002nonlinear} with respect to \(\vect{d}_i\), which proves the claim.

\end{proof}
}

Since a stationary underwater vehicle may require nonzero actuation to balance the restoring wrench, we center the optimization around a nominal trim allocation
\(
\vect{f}_{i,\mathrm{tr}}(\pose_i)
\)
satisfying
\(
    \matr{B}_i
    \vect{f}_{i,\mathrm{tr}}(\pose_i)
    =
    \gravitywrench_i(\pose_i).
\)
For overactuated vehicles, \(\vect{f}_{i,\mathrm{tr}}\) may be chosen as the minimum-norm admissible solution.

At each control instant, robot \(i\) solves
\begin{subequations}
\label{eq:clf_qp}
\begin{align}
    \left(
        \vect{f}_i^{\star},
        \delta_i^{\star}
    \right)
    =
    \argmin_{\vect{f}_i,\delta_i}
    &\quad
    \frac{1}{2}
    \norm{
        \vect{f}_i-\vect{f}_{i,\mathrm{tr}}
    }_{\matr{H}_i}^{2}
    +
    p_{1,i}\delta_i
    +
    \frac{p_{2,i}}{2}\delta_i^{2}
    \label{eq:clf_qp_objective}
    \\
    \mathrm{s.t.}\quad&
    a_i
    +
    \vect{b}_i^{\top}\vect{f}_i
    \leq
    -
    \vect{e}_{\veltwist,i}^{\top}
    \matr{K}_{e,i}
    \vect{e}_{\veltwist,i}
    +
    \delta_i,
    \label{eq:clf_qp_constraint}
    \\
    &
    \underline{\vect{f}}_i
    \leq
    \vect{f}_i
    \leq
    \overline{\vect{f}}_i,
    \label{eq:clf_qp_input_constraint}
    \\
    &
    \delta_i\geq0.
    \label{eq:clf_qp_relaxation_constraint}
\end{align}
\end{subequations}
Here,
\(
\matr{H}_i
=
\matr{H}_i^{\top}
\succ\boldsymbol{0}
\),
\(p_{1,i}\geq0\), and \(p_{2,i}>0\).
The first term penalizes deviations from the trim allocation, while the
linear and quadratic penalties on \(\delta_i\) discourage activation of the
CLF relaxation and penalize large violations.


{
Since \(\matr{B}_i\) has full row rank,
\(
\vect{b}_i=\matr{B}_i^\top\vect{e}_{\veltwist,i}=\boldsymbol{0}
\)
if and only if
\(
\vect{e}_{\veltwist,i}=\boldsymbol{0}.
\)
Hence, without thruster bounds, the CLF constraint is always feasible with
zero relaxation: for
\(
\vect{e}_{\veltwist,i}\neq\boldsymbol{0}
\)
the affine input term can enforce the dissipation condition, while for
\(
\vect{e}_{\veltwist,i}=\boldsymbol{0}
\)
one has
\(
a_i=\vect{b}_i=\boldsymbol{0}.
\)
}


\subsection{Adaptive Barrier-Domain Enlargement}
\label{subsec:adaptive_barrier_domain}

We now specify the dynamics of the relaxation states
\(s_{\ell,i}\in[0,1]\) introduced in
Sec.~\ref{subsec:recentered_barriers}. The conservative domain is enlarged
only when the corresponding constraint approaches its adaptive boundary
\emph{and} the zero-relaxation CLF condition is incompatible with the
actuator limits. 

For each constraint \(\ell\in\mathcal L\), choose
\(
    0<h_\ell^{\mathrm{on}}
    <h_\ell^{\mathrm{off}}
    <h_{\ell,ij}^{d,c}
\)
and define the smooth activation
\begin{equation}
    \sigma_\ell(h)
    \define
    \begin{cases}
        1, & h\leq h_\ell^{\mathrm{on}},\\
        6\xi_\ell^5-15\xi_\ell^4+10\xi_\ell^3,
        & h_\ell^{\mathrm{on}}<h<h_\ell^{\mathrm{off}},\\
        0, & h\geq h_\ell^{\mathrm{off}},
    \end{cases}
    \label{eq:adaptive_smooth_activation}
\end{equation}
with $\xi_\ell
    \define
    ({h_\ell^{\mathrm{off}}-h})/
         ({h_\ell^{\mathrm{off}}-h_\ell^{\mathrm{on}}}).$
Thus, \(\sigma_\ell(h_{\ell,ij}^{a})\) identifies constraints approaching
their adaptive boundary, while \(\sigma_\ell=0\) at the desired
configuration. The initial value \(s_{\ell,i}(0)\in[0,1]\) is selected such
that
\(
h_{\ell,ij}^{a}(0)>0.
\)
Such an initialization always exists for an initial pose strictly inside
the physical domain, since \(h_{\ell,ij}^{a}\) coincides with the
corresponding physical constraint function at \(s_{\ell,i}=1\).

To determine whether enlargement is required, define the minimum CLF
relaxation compatible with the actuator bounds,
\begin{equation}
    \delta_i^{\mathrm{req}}
    \define
    \left[
        a_i
        +
        \vect e_{\veltwist,i}^{\top}
        \matr K_{e,i}
        \vect e_{\veltwist,i}
        +
        \min_{\underline{\vect f}_i
        \leq\vect f_i\leq\overline{\vect f}_i}
        \vect b_i^\top\vect f_i
    \right]_+ .
    \label{eq:required_clf_relaxation}
\end{equation}
By construction,
\(
\delta_i^{\mathrm{req}}=0
\)
if and only if the zero-relaxation CLF constraint is feasible. Since the
actuator constraints are box constraints, the inner minimization is
available in closed form. We define
\(
    \gamma_i
    \define
    {(\delta_i^{\mathrm{req}})^2}/
         ({(\delta_i^{\mathrm{req}})^2+\varepsilon_\delta^2}),
\) with
\(
    \varepsilon_\delta>0.
\)
Hence, \(\gamma_i=0\) whenever the zero-relaxation CLF condition is
feasible. We use \(\delta_i^{\mathrm{req}}\), rather than the optimized QP
slack, since the latter may also reflect the trade-off with control effort.

Since both \(h_{\ell,ij}^{a}\) and \(h_{\ell,ij}^{d,a}\) are shifted by
\(\bar\rho_\ell s_{\ell,i}\),
\begin{equation}
    D_{\ell,i}
    \define
    \frac{\partial V_i}{\partial s_{\ell,i}}
    =
    -
    \mu_{\ell,i}\bar\rho_\ell
    \frac{
        (h_{\ell,ij}^{a}-h_{\ell,ij}^{d,a})^2
    }{
        h_{\ell,ij}^{a}(h_{\ell,ij}^{d,a})^2
    }
    \leq0.
    \label{eq:adaptive_potential_s_derivative}
\end{equation}
Moreover,
\(
D_{\ell,i}\to-\infty
\)
as
\(
h_{\ell,ij}^{a}\to0^+.
\)
The relaxation dynamics are chosen as
\begin{equation}
\begin{aligned}
    \dot s_{\ell,i}
    =
    \Pi_{\leq1}\Big(
        s_{\ell,i},
        &-k_{s,\ell}
        \bigl(1-\sigma_\ell(h_{\ell,ij}^{a})\bigr)s_{\ell,i}
        \\
        &-
        k_{b,\ell}\gamma_i
        \sigma_\ell(h_{\ell,ij}^{a})D_{\ell,i}
    \Big),
\end{aligned}
\label{eq:adaptive_domain_controller}
\end{equation}
where \(k_{s,\ell},k_{b,\ell}>0\) and
\(
\Pi_{\leq1}(s,v)\define
v
\)
for \(s<1\), and
\(
\Pi_{\leq1}(s,v)\define\min\{0,v\}
\)
for \(s=1\).
At \(s_{\ell,i}=0\), the right-hand side of
\eqref{eq:adaptive_domain_controller} is nonnegative, so no lower
projection is required and \(s_{\ell,i}(t)\in[0,1]\).
The enlargement channel is active only when
\(\gamma_i>0\) and \(\sigma_\ell>0\). In particular, when
\(h_{\ell,ij}^{a}\leq h_\ell^{\mathrm{on}}\), one has
\(\sigma_\ell=1\), the recovery term is disabled, and
\(\dot s_{\ell,i}\geq0\); hence the adaptive domain cannot shrink while
the constraint margin is small. Conversely, for
\(h_{\ell,ij}^{a}\geq h_\ell^{\mathrm{off}}\), \(\sigma_\ell=0\) and
\(s_{\ell,i}\) recovers exponentially toward zero.
Finally, if \(\vect s_i(0)=\vect 0\) and the zero-relaxation CLF
condition is feasible, then \(\gamma_i=0\) and
\eqref{eq:adaptive_domain_controller} gives \(\dot{\vect s}_i=\vect 0\),
so the nominal case of Proposition~\ref{prop:nominal_convergence_robustness} is recovered
without additional hypotheses.

\begin{proposition}[Adaptive-domain well-posedness]
\label{prop:adaptive_wellposed}
Consider a follower \(i\) with parent \(j\), with
\(s_{\ell,i}(0)\in[0,1]\) and
\(h_{\ell,ij}^{a}(0)>0\) for every \(\ell\in\mathcal L\).
Let \(T>0\) be finite and suppose that the closed-loop solution exists on
\([0,T)\), \(\vect d_i\) is bounded, and
\(\delta_i\in\mathcal L_1([0,T))\). Then, for every
\(\ell\in\mathcal L\),
\(
s_{\ell,i}(t)\in[0,1],
\) 
\(
\inf_{t\in[0,T)} h_{\ell,ij}^{a}(t)>0 .
\)
\end{proposition}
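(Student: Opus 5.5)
The proof splits into two parts: invariance of \([0,1]\) for each relaxation state, and a uniform positive lower bound on each adaptive margin \(h_{\ell,ij}^{a}\).

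\emph{Invariance of \([0,1]\).} This follows directly from the structure of \eqref{eq:adaptive_domain_controller}. The projection \(\Pi_{\leq1}\) keeps \(\dot s_{\ell,i}\le0\) whenever \(s_{\ell,i}=1\). At \(s_{\ell,i}=0\) the recovery term vanishes, and the enlargement term \(-k_{b,\ell}\gamma_i\sigma_\ell D_{\ell,i}\) is nonnegative by \eqref{eq:adaptive_potential_s_derivative}. A standard Nagumo/comparison argument then gives \(s_{\ell,i}(t)\in[0,1]\) on \([0,T)\).

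\emph{Lower bound on the margin.} This is the substantive part, and I would argue through the composite function \(W_i\). From \eqref{eq:affine_clf_derivative_thrusters} and the QP constraint \eqref{eq:clf_qp_constraint},
\[
\dot W_i\le -\vect\zeta_i^\top\matr K_{\eta,i}\vect\zeta_i-\vect e_{\veltwist,i}^\top\matr K_{e,i}\vect e_{\veltwist,i}+\vect\zeta_i^\top\tilde{\veltwist}_i+\chi_i+\psi_i+\delta_i .
\]
Using \(|\chi_i|\le\norm{\vect\zeta_i}\norm{\vel_j}\) and Young's inequality, as in the proof of Proposition~\ref{prop:nominal_convergence_robustness}, the cross terms are absorbed, leaving
\(\dot W_i\le c\norm{\vect d_i}^2+\delta_i+\psi_i\).
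Now
\(\psi_i=\sum_\ell D_{\ell,i}\dot s_{\ell,i}\), and I would show \(\psi_i\le0\) term by term. In the recovery channel, \(D_{\ell,i}\le0\) and \(\dot s\) carries the factor \(-k_s(1-\sigma)s\le0\), so the product is nonnegative. This is the obstacle: recovery shrinks the domain and raises \(V_i\). It must be handled by showing that recovery only acts where it is harmless. In the enlargement channel the contribution is \(-k_b\gamma_i\sigma_\ell D_{\ell,i}^2\le0\), and the projection at \(s=1\) only removes positive \(\dot s\), so it does not spoil the sign.

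For the recovery term I would argue as follows. Where \(\sigma_\ell<1\) we have \(h_{\ell,ij}^{a}>h_\ell^{\mathrm{on}}\), so \(|D_{\ell,i}|\) is bounded by a constant \(\bar D_\ell\); this uses \(h^{d,a}\ge h^{d,c}>0\) and the fact that \(h^a\) is bounded above on a compact set. The recovery contribution is therefore at most \(k_{s,\ell}\bar D_\ell\). Integrating over the finite interval \([0,T)\) then gives
\[
W_i(t)\le W_i(0)+c\int_0^T\norm{\vect d_i}^2+\norm{\delta_i}_{\mathcal L_1}+\textstyle\sum_\ell k_{s,\ell}\bar D_\ell T<\infty .
\]
Here I use that \(\vect d_i\) is bounded, \(\delta_i\in\mathcal L_1\), and \(T<\infty\); this is exactly why the hypotheses include finiteness of \(T\).

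\emph{Conclusion from bounded \(W_i\).} Since \(h^{d,a}_{\ell,ij}\in[h^{d,c}_{\ell,ij},h^{d,c}_{\ell,ij}+\bar\rho_\ell]\) is bounded above and away from zero, and \(\overline\beta(h;h^d)\to\infty\) uniformly in such \(h^d\) as \(h\to0^+\), the bound \(V^a_{\ell,ij}\le W_i/\mu_{\ell,i}\) gives \(\inf_{[0,T)}h^a_{\ell,ij}>0\).

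An alternative local argument is also available: while \(h^a\le h^{\mathrm{on}}\), \(s\) is nondecreasing, so the barrier can only grow through the pose motion, which is controlled by the same \(W_i\) estimate. I expect the delicate point to be making the uniform bound on \(|D_{\ell,i}|\) in the transition region rigorous, together with the sign bookkeeping under the projection.
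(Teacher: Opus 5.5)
Your proposal is correct and follows essentially the paper's route. You get invariance of \([0,1]\) from the sign of the dynamics at \(s_{\ell,i}=0\) and the projection at \(s_{\ell,i}=1\). You then derive \(\dot W_i\le c_{d,i}\norm{\vect d_i}^2+\delta_i+\bar\psi_i\), integrate over the finite horizon, and conclude with the barrier lower bound. Your channel-wise split of \(\psi_i\) (enlargement contributes \(\le0\), recovery is bounded where \(\sigma_\ell<1\)) is equivalent to the paper's region-wise split: \(D_{\ell,i}\dot s_{\ell,i}\le0\) when \(h_{\ell,ij}^a\le h_\ell^{\mathrm{on}}\), and everything is bounded otherwise.

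The point you flag as delicate is settled in the paper as follows. Along the existing solution all adaptive margins are positive and \(s_{\ell,i}\le1\), which forces \(d_{\min}<d_{ij}<d_{\max}\) and \(|\alpha_{h,ij}|,|\alpha_{v,ij}|<1\). Hence \(h_{\ell,ij}^a\) is bounded above and \(\bar D_\ell\) is finite. Your uniform-divergence step is made explicit by \(\overline\beta(h;h^d)\ge\ln(h^d/h)-1\) with \(h^d\ge h_{\ell,ij}^{d,c}\).
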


\begin{proof}
Invariance of \(s_{\ell,i}\in[0,1]\) follows from the sign of
\eqref{eq:adaptive_domain_controller} at \(s_{\ell,i}=0\) and from the
projection at \(s_{\ell,i}=1\).
Since \(s_{\ell,i}\leq1\), positivity of \(h_{\ell,ij}^{a}\) implies
\(
d_{\min}<d_{ij}<d_{\max}
\)
and
\(
|\alpha_{h,ij}|,|\alpha_{v,ij}|<1,
\)
and hence all \(h_{\ell,ij}^{c}\) are bounded on \([0,T)\).

If \(h_{\ell,ij}^{a}\leq h_\ell^{\mathrm{on}}\), then
\(\sigma_\ell=1\) and
\eqref{eq:adaptive_potential_s_derivative}--%
\eqref{eq:adaptive_domain_controller} yield
\(
D_{\ell,i}\dot s_{\ell,i}\leq0.
\)
Otherwise \(h_{\ell,ij}^{a}>h_\ell^{\mathrm{on}}\); since
\(h_{\ell,ij}^{d,a}\geq h_{\ell,ij}^{d,c}>0\), both
\(D_{\ell,i}\) and \(\dot s_{\ell,i}\) are bounded.
Hence
\(
\psi_i=\sum_{\ell\in\mathcal L}D_{\ell,i}\dot s_{\ell,i}
\leq\bar\psi_i
\)
for some finite \(\bar\psi_i\).

Using \eqref{eq:affine_clf_derivative_thrusters},
\eqref{eq:clf_qp_constraint}, and Young's inequality gives
\(
\dot W_i
\leq
c_{d,i}\norm{\vect d_i}^{2}+\delta_i+\bar\psi_i
\)
for some \(c_{d,i}>0\).
Since \(\vect d_i\) is bounded, \(\delta_i\in\mathcal L_1([0,T))\),
and \(T<\infty\), there exists \(\bar W_i<\infty\) such that
\(W_i(t)\leq\bar W_i\) on \([0,T)\).

Finally,
\(
\overline\beta(h;h^d)\geq\ln(h^d/h)-1
\),
\(W_i\geq\mu_{\ell,i}V_{\ell,ij}^{a}\), and
\(h_{\ell,ij}^{d,a}\geq h_{\ell,ij}^{d,c}\) imply
\(
h_{\ell,ij}^{a}(t)
\geq
h_{\ell,ij}^{d,c}
e^{-1-\bar W_i/\mu_{\ell,i}}
>0 ,
\)
which proves the claim.
\end{proof}

When \(\gamma_i>0\), the enlargement action strengthens as
\(h_{\ell,ij}^{a}\to0^+\); when \(\gamma_i=0\), the enlargement is
inactive and the recovery term is disabled in the near-boundary region.
Proposition~\ref{prop:adaptive_wellposed} guarantees that, under its stated
finite-time boundedness conditions, the adaptive boundary remains strictly
outside the state. At \(s_{\ell,i}=1\), the adaptive and physical
boundaries coincide and no further enlargement is possible; preservation
of the physical constraint then necessarily depends on the available
control authority and, for relative constraints, on the parent motion.

%% file: Sections/results.tex
The proposed approach is evaluated through Software-in-the-Loop (SITL)
simulations in Gazebo with PX4 and real-world experiments using BlueROV2
Heavy underwater vehicles. A sequence of piecewise-constant formations is
provided as reference, together with a reference trajectory for the leader.
For the nominal SITL validation, the controller gains are
\(
\matr{K}_{\eta,i}
=
\operatorname{diag}(0.55,0.55,0.55,0.80,0.80,0.80)
\),
\(
\matr{K}_{e,i}
=
0.8\,\matr{M}_i
\),
and
\(
\matr{\Lambda}_{i}
=
\operatorname{diag}(3,3,3,4,4,4)
\).
The barrier weights are
\(
\mu_{\delta,i}
=
\mu_{\Delta,i}
=
0.18
\)
and
\(
\mu_{h,i}
=
\mu_{v,i}
=
0.25
\),
while the QP parameters are
\(
\matr{H}_{i}
=
3.77\times10^{-4}\matr{I}_{8}
\),
\(
p_{1,i}=100
\),
and
\(
p_{2,i}=5\times10^{3}
\).
For the adaptive-domain dynamics, the barrier gain is
\(
k_{b,\ell}=0.20
\),
the recovery gain is
\(
k_{s,\ell}=0.80
\),
and
\(
\epsilon_{\delta}=10^{-3}
\),
for all constraint channels~$\ell$.
The smooth activation thresholds are selected as
\(
h_{\ell,\mathrm{on}}
=
0.10\,h_{\ell}^{d,c}
\)
and
\(
h_{\ell,\mathrm{off}}
=
0.30\,h_{\ell}^{d,c}
\).
%
A formation of five robots is considered within a model of an experimental water tank, shown on the right of Fig.~\ref{fig:SITL_setup}, with the corresponding sensing-graph topology shown on the left.
The SITL setup includes model mismatch and unmodeled dynamics, including imprecise vehicle parameters (buoyancy, added mass and damping) and unmodeled thruster dynamics.  Fig.~\ref{fig:sitl_constraints_error} summarizes the five-robot SITL results, showing the evolution of the sensing constraints together with the corresponding formation errors.
A separate two-robot stress test evaluates the adaptive-domain mechanism under limited control authority. The desired distance is \(2.20\,\mathrm{m}\), inside the conservative bound \(d_{\max}^{c}=2.40\,\mathrm{m}\), and the follower thrust limits are reduced to \(20\%\) of their nominal values. At \(t=60\,\mathrm{s}\), the leader moves away at \(0.45\,\mathrm{m/s}\) for \(5\,\mathrm{s}\), inducing saturation and activating the distance relaxation. Fig.~\ref{fig:sitl_relaxation} shows that the adaptive bound enlarges while the physical constraint remains satisfied.

\begin{figure}[htbp]
    \centering
    \begin{subfigure}[b]{0.48\linewidth}
        \centering
        \includegraphics[width=\linewidth]{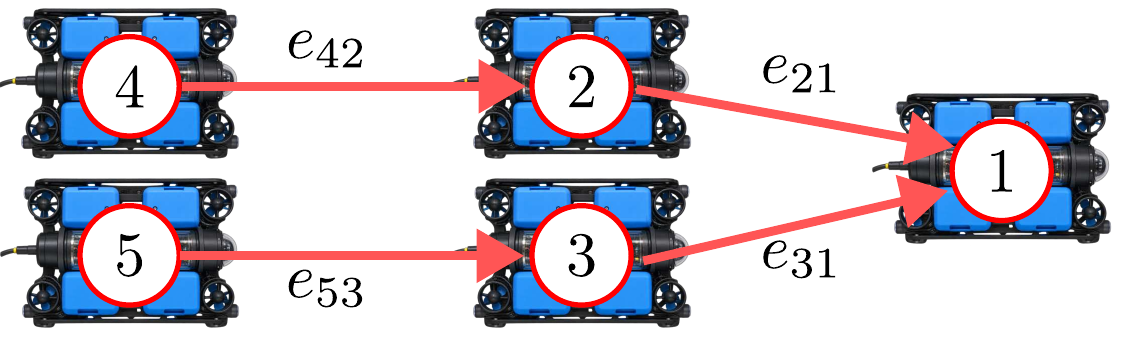}
        \label{fig:sitl_graph}
    \end{subfigure}
    \hfill
    \begin{subfigure}[b]{0.48\linewidth}
        \centering
        \includegraphics[width=\linewidth]{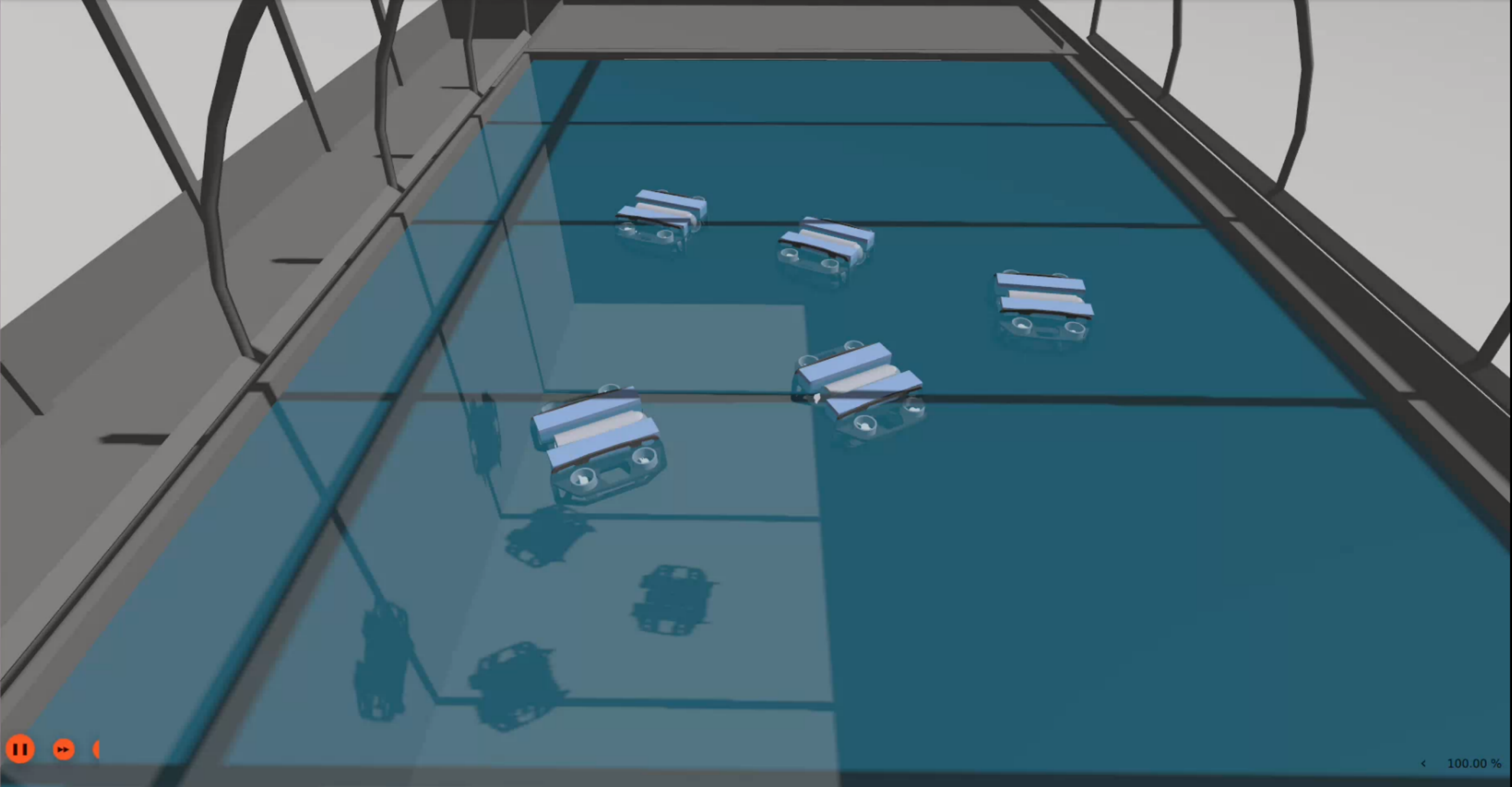}
        \label{fig:gazebo_sim}
    \end{subfigure}
    \caption{
    SITL setup. Left: directed sensing graph. Right: Gazebo simulation setup. 
    }
    \label{fig:SITL_setup}
\end{figure}

\begin{figure}[t]
    \centering
    \includegraphics[width=1.0\columnwidth]{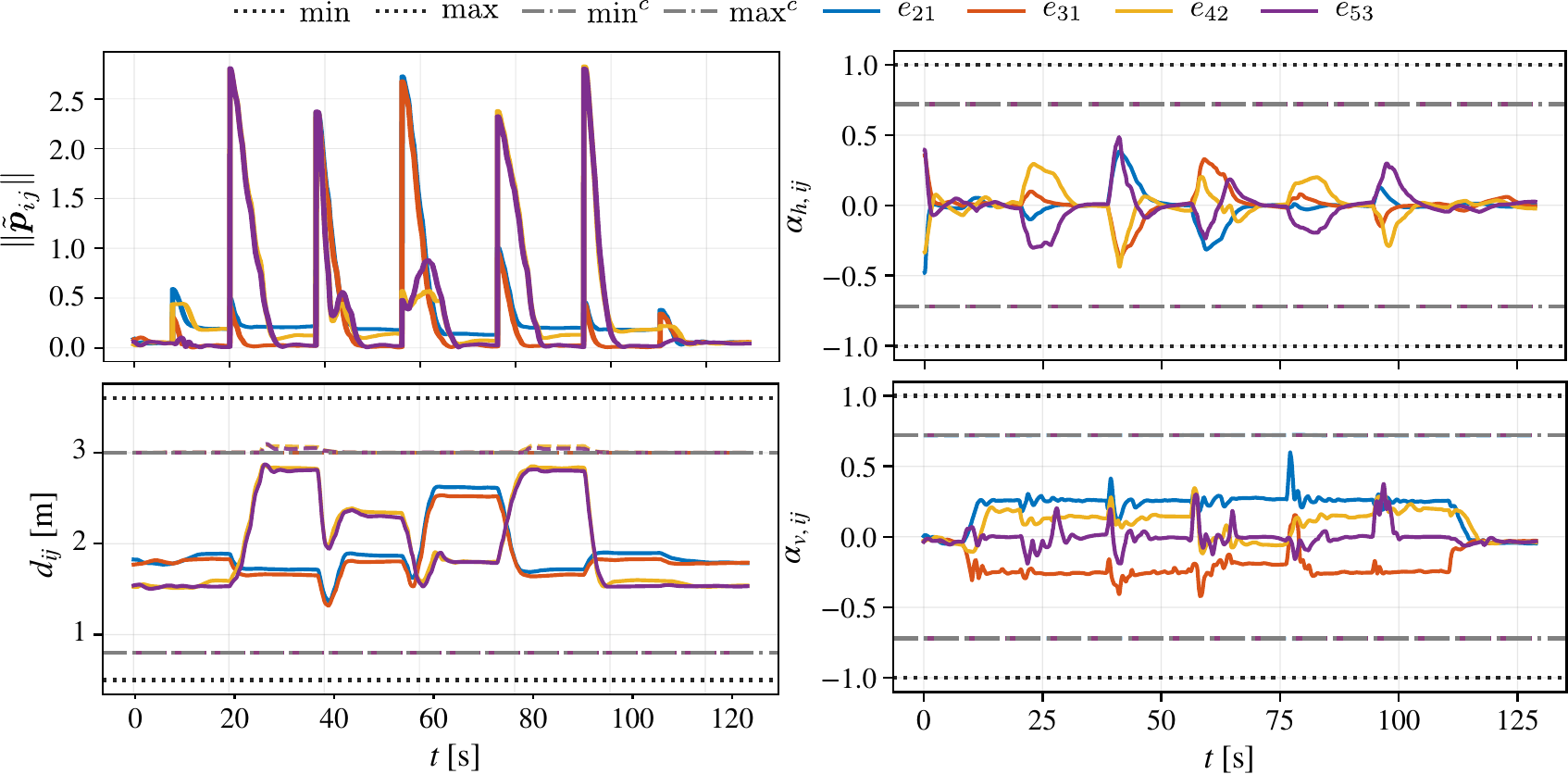}
    \caption{Five-robot SITL results. Top: formation error and horizontal FoV; bottom: distance and vertical FoV. Solid/dashed lines denote measured quantities/adaptive bounds; dotted/dash-dotted lines denote physical/conservative limits.}
    \label{fig:sitl_constraints_error}
\end{figure}

\begin{figure}[t]
    \centering
    \includegraphics[width=1.0\columnwidth]{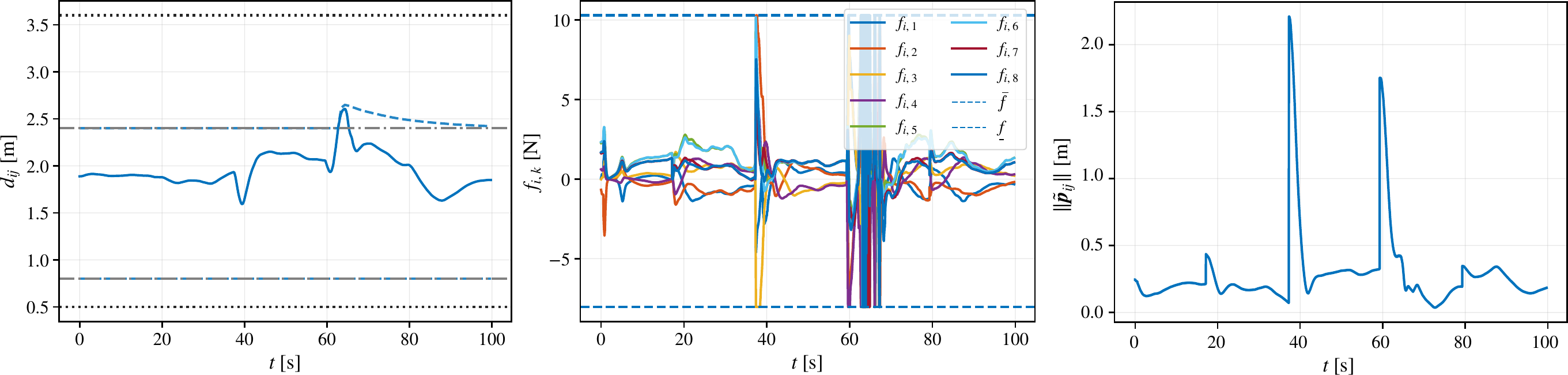}
    \caption{Gazebo/PX4 SITL stress test. Left: inter-robot distance $d_{ij}$ with physical, conservative, and adaptive bounds. Center: follower thruster forces $f_{i,k}$ and limits $\underline{f}$, $\bar{f}$. Right: relative-position error $\|\tilde{\vect{p}}_{ij}\|$. Actuator saturation triggers distance-domain enlargement while the physical constraint remains satisfied.}
    \label{fig:sitl_relaxation}
\end{figure}

%% file: Sections/conclusions.tex
This paper presented a decentralized formation-control framework for underwater multi-robot systems subject to limited sensing and bounded actuation. Constraint satisfaction is encoded through barrier Lyapunov functions whose conservative sensing domains are adaptively enlarged when necessary, while command-filtered backstepping and a thruster-level CLF-QP account explicitly for the vehicle dynamics and actuator limits. The proposed approach is validated through realistic SITL Gazebo simulations 
with underwater vehicles. Future work will focus on lifting the assumption of a common reference frame and on deploying the algorithm fully onboard the robots. 